\documentclass[10pt,twocolumn,letterpaper]{article}

\usepackage{cvpr}              
\definecolor{cvprblue}{rgb}{0.21,0.49,0.74}
\usepackage[pagebackref,breaklinks,colorlinks,allcolors=cvprblue]{hyperref}

\usepackage{hyperref}
\usepackage{url}
\hypersetup{
    colorlinks=true,
    linkcolor=red,
    urlcolor=cyan,
}
\usepackage{amsfonts}       
\usepackage{amsthm}         
\usepackage{nicefrac}       
\usepackage{microtype}      
\usepackage{graphicx}
\usepackage{amsmath}
\usepackage{algorithm}
\usepackage{algpseudocode}
\usepackage{pifont}
\usepackage{fontawesome5}
\usepackage{tabularx}
\usepackage{multirow}
\usepackage{wrapfig}
\usepackage{booktabs}
\usepackage{colortbl}
\usepackage{xspace}
\usepackage{amssymb}
\usepackage{wrapfig}
\usepackage{enumitem}
\usepackage[accsupp]{axessibility}

\newcommand{\cmark}{\ding{51}} 
\newcommand{\xmark}{\ding{55}} 

\usepackage{caption}

\newtheorem{theorem}{Theorem}

\newtheorem{lemma}[theorem]{Lemma}
\newtheorem{assumption}{Assumption}
\theoremstyle{definition}
\newtheorem{definition}{Definition}
\newtheorem*{remark}{Remark}

\usepackage{tikz}

\definecolor{midgray}{gray}{0.60}

\newcommand{\recttikzl}[4][1.8em]{%
  \tikz[baseline=(r.base)] 
    \node (r)
      [draw=#2, fill=#3, line width=#4,
       inner sep=0pt,
       minimum width=#1, minimum height=1.0em,
       text height=1.5ex, text depth=.25ex] {};%
}

\definecolor{Green}{HTML}{3d9f3c}
\definecolor{customgreen}{HTML}{008000}

\definecolor{RED}{HTML}{e9212c}
\definecolor{BLUE}{HTML}{1181b2}
\newcolumntype{Y}{>{\centering\arraybackslash}X}
\newcolumntype{Z}{>{\raggedleft\arraybackslash}X}

\def\paperID{{*****}} 
\def\confName{CVPR}
\def\confYear{2026}

\title{
Prime Once, then Reprogram Locally: \\
An Efficient Alternative to Black-Box Service Model Adaptation
}

\author{Yunbei Zhang$^{1}$ \quad Chengyi Cai$^{2}$ \quad Feng Liu$^{2}$ \quad Jihun Hamm$^{1}$\\
$^{1}$Tulane University \quad $^{2}$University of Melbourne
}

\begin{document}
\maketitle

\begin{abstract}
Adapting closed-box service models (i.e., APIs) for target tasks typically relies on reprogramming via Zeroth-Order Optimization (ZOO). However, this standard strategy is known for extensive, costly API calls and often suffers from slow, unstable optimization. Furthermore, we observe that this paradigm faces new challenges with modern APIs (e.g., GPT-4o). These models can be less sensitive to the input perturbations ZOO relies on, thereby hindering performance gains. To address these limitations, we propose an Alternative efficient Reprogramming approach for Service models (AReS). Instead of direct, continuous closed-box optimization, AReS initiates a single-pass interaction with the service API to prime an amenable local pre-trained encoder. This priming stage trains only a lightweight layer on top of the local encoder, making it highly receptive to the subsequent glass-box (white-box) reprogramming stage performed directly on the local model. Consequently, all subsequent adaptation and inference rely solely on this local proxy, eliminating all further API costs. Experiments demonstrate AReS's effectiveness where prior ZOO-based methods struggle: on GPT-4o, AReS achieves a +27.8\% gain over the zero-shot baseline, a task where ZOO-based methods provide little to no improvement. Broadly, across ten diverse datasets, AReS outperforms state-of-the-art methods (+2.5\% for VLMs, +15.6\% for standard VMs) while reducing API calls by over 99.99\%. AReS thus provides a robust and practical solution for adapting modern closed-box models. Code: \url{https://github.com/yunbeizhang/AReS}.
\end{abstract}

\section{Introduction}
\label{sec:intro}
In recent years, Model-as-a-Service (MaaS) has emerged as a dominant paradigm for deploying and accessing state-of-the-art (SOTA) machine learning models \citep{gpt4,brown2020language,CLIP,gemini}. Companies and research institutions often release their large pre-trained models (which we refer to as \emph{service models} in this work) as inference APIs, allowing users to query these powerful models without direct access to model parameters or architecture details. This closed-box nature creates a significant challenge for transfer learning \citep{pmlr-v235-sun24p, sun2022bbt, wang2025doctor}. Unlike some approaches that may assume access to intermediate features or token embeddings \citep{ouali2023black, wang2024craft, zhang2025dpcore} (see Table~\ref{tab:bb_comparison} for a detailed comparison), this work focuses on the most restrictive closed-box setting, where only raw input-prediction access to the service model is available. In this challenging scenario, adapting models to downstream tasks, which typically requires gradient information, becomes particularly difficult. Model Reprogramming (MR) \citep{cai2024sample,chen2024model, jia2022visual, bar_icml20, park2025zip}, known as Visual Reprogramming (VR) or Visual Prompting (VP) in vision domains, has been developed as one solution, enabling adaptation of MaaS models through input-level modifications optimized using zeroth-order optimization (ZOO) \citep{ liu2018zeroth,spall1992multivariate, spall1997one,tu2019autozoom} based solely on model outputs.

While closed-box visual reprogramming approaches such as BAR~\citep{bar_icml20} and BlackVIP~\citep{balckvip_CVPR23} have demonstrated promising results under this strict input-prediction only setting, they entail practical challenges. The optimization process, reliant on approximate gradients, can be slow and unstable, and the computational and practical costs could be prohibitive \citep{9186148, zhang2024revisiting}. Such methods require numerous API calls not only during the lengthy training phase but also for every inference instance, leading to significant financial burdens and requiring uninterrupted connectivity to the service model \citep{balckvip_CVPR23, bar_icml20, wang2024craft}. 
More critically, our experiments indicate that this ZOO-based paradigm faces new challenges with modern, robust APIs. 
We find that powerful models like GPT-4o can be less sensitive to the noisy input perturbations central to ZOO, yielding little to no performance gains (Fig.~\ref{fig:gpt4o} and Table~\ref{tab:mllm_api}), while on models like LLaVA~\cite{llava}, these perturbations may affect delicate vision-language alignment.
Furthermore, the substantial resources expended for these SOTA methods do not always translate into commensurate performance enhancements (Fig.~\ref{fig:api_comparison} and \ref{fig:time_comparison}), raising concerns about their efficiency-to-effectiveness ratio \citep{balckvip_CVPR23, sun_ijcai22}. 
These substantial hurdles in efficiency, cost, and now, effectiveness on modern models, form the core motivation for our work. It compels us to ask: \emph{Are there alternative approaches that can offer advantages beyond conventional closed-box model reprogramming, particularly in overcoming these practical and economic limitations?}



\begin{figure*}[t]
    \centering
    \subfloat[\textbf{GPT-4o Cost (USD)}]{
      \begin{minipage}[b]{0.32\linewidth} 
        \centering  
        \includegraphics[width=\linewidth]{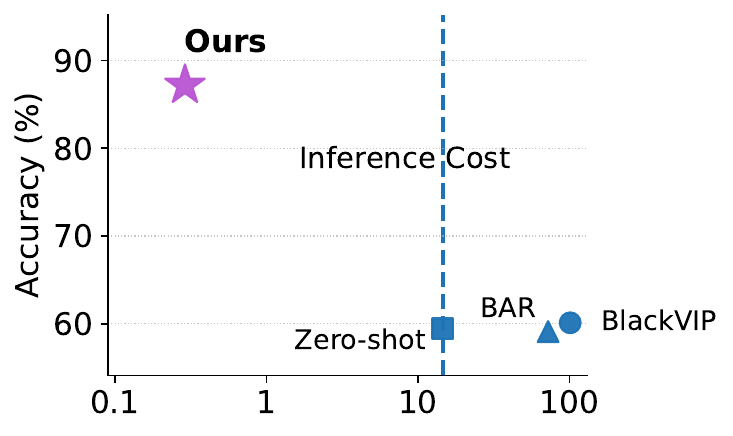} 
        \vspace{-4mm}
        \label{fig:gpt4o}
      \end{minipage}
    }
    \subfloat[\textbf{\#API Calls}]{
      \begin{minipage}[b]{0.32\linewidth} 
        \centering  
        \includegraphics[width=\linewidth]{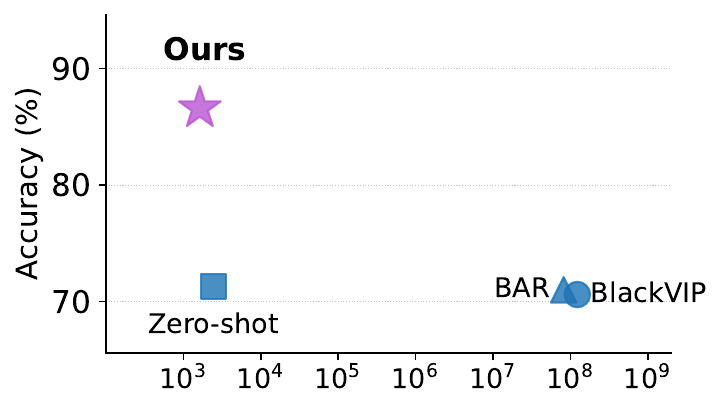}   
        \vspace{-4mm}
        \label{fig:api_comparison}
      \end{minipage}
    }
    \subfloat[\textbf{Training Time (hour)}]{
      \begin{minipage}[b]{0.32\linewidth} 
        \centering  
        \includegraphics[width=\linewidth]{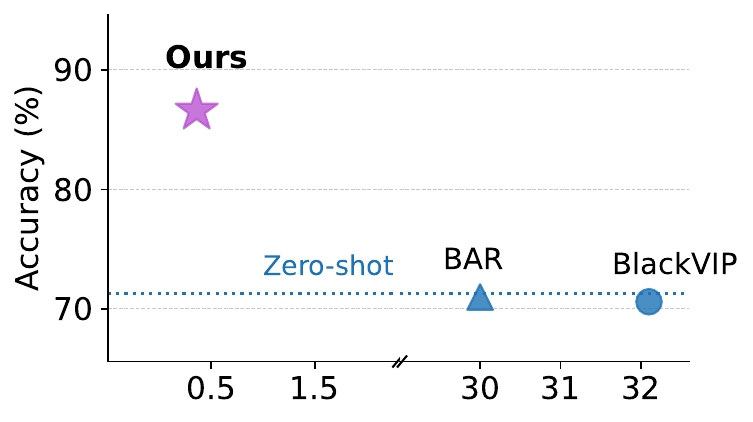}   
        \vspace{-4mm}
        \label{fig:time_comparison}
      \end{minipage}
    }
    \vspace{-1.5mm}
    \caption{
    \textbf{(a)} On the real-world GPT-4o API, ZOO-based methods show limited effectiveness, providing little to no improvement over zero-shot performance while incurring high total (training and inference) costs. \textbf{(b, c)} On CLIP ViT-B/16, these methods require $\sim$10$^8$ API calls and over 32 hours of training on Flowers102, yet still underperform ours, AReS, which uses only $\sim$10$^3$ API calls and 0.4 hours.}
    \vspace{-3mm}
\end{figure*}

We propose an \textbf{\underline{A}}lternative efficient \textbf{\underline{Re}}programming approach for \textbf{\underline{S}}ervice models (\textbf{AReS}) that directly addresses these challenges (see Fig.~\ref{fig:algo_motivation}). Instead of direct, continuous closed-box optimization, AReS initiates a single-pass interaction with the service API to prepare an amenable local pre-trained encoder (e.g., a publicly available one, making the same initial assumption as methods like BlackVIP~\cite{balckvip_CVPR23} but utilizing it for a fundamentally different purpose). This priming stage trains only a lightweight layer on top of the local encoder. Crucially, this step does not require a shared label space; it successfully prepares the local model for reprogramming even when the service API's outputs are in a disjoint label space (e.g., ImageNet) from the target task (e.g., Flowers).
This approach, supported by a theoretical analysis connecting priming faithfulness to downstream performance, enables effective glass-box visual reprogramming to occur entirely and efficiently on the local model. Consequently, all subsequent adaptation and inference rely solely on this local model, eliminating further API costs.
For example, on the Flowers102 dataset (as detailed in Fig.~\ref{fig:api_comparison} and \ref{fig:time_comparison}), our approach achieves 86.6\% accuracy, outperforming BlackVIP (70.6\%), while dramatically reducing API calls from $\sim10^8$ down to $\sim10^3$ ($>$\textbf{99.99}\% reduction) and slashing computation time from over 30 hours to less than half an hour ($>$\textbf{98.88}\% reduction).  
Crucially, on GPT-4o, AReS improves by $+27.8\%$, while BlackVIP gains only $+0.7\%$. This is because the service model's robustness to input perturbations hinders ZOO-based methods, a limitation AReS bypasses by avoiding direct reprogramming.

We evaluate AReS across a broad range of service models. Our primary evaluation spans standard Vision Models (VMs) (e.g., ViT-B/16~\citep{ViT}, ResNet101~\citep{resnet}) and Vision-Language Models (VLMs) (e.g., CLIP~\citep{CLIP}) over ten diverse datasets. We also conduct evaluations on modern models, including the Multimodal Large Language Models (MLLMs) LLaVA~\cite{llava}, the proprietary MLLM API GPT-4o~\cite{gpt4}, and the commercial VM API Clarifai. Through these comprehensive experiments, we demonstrate that our approach consistently outperforms SOTA ZOO-based methods while dramatically reducing computational requirements and associated costs. These findings have important implications for the broader field of transfer learning in increasingly API-centric machine learning ecosystems, suggesting a more effective path forward that balances performance, efficiency, and practicality.
Our main contributions are summarized as follows:
\begin{enumerate}
    \item We identify a key challenge for ZOO-based reprogramming on modern, robust APIs and present an effective alternative that shifts adaptation to a local model after a minimal, single-pass interaction with the service API.

    \item We introduce \textbf{AReS}, a simple, theoretically-supported framework that combines a priming step with efficient glass-box reprogramming, uniquely enabling \emph{cost-free} inference without further API dependency.

    \item Through extensive experiments, AReS achieves superior performance, including significant gains on real-world APIs (e.g., GPT-4o and Clarifai) where prior methods fail, alongside average accuracy improvements of $+2.5\%$ for VLM and $+15.6\%$ for VM over BlackVIP, while reducing API calls by $>99.99\%$.
\end{enumerate}

\begin{figure*}
    \centering
    \includegraphics[width=1.0 \linewidth]{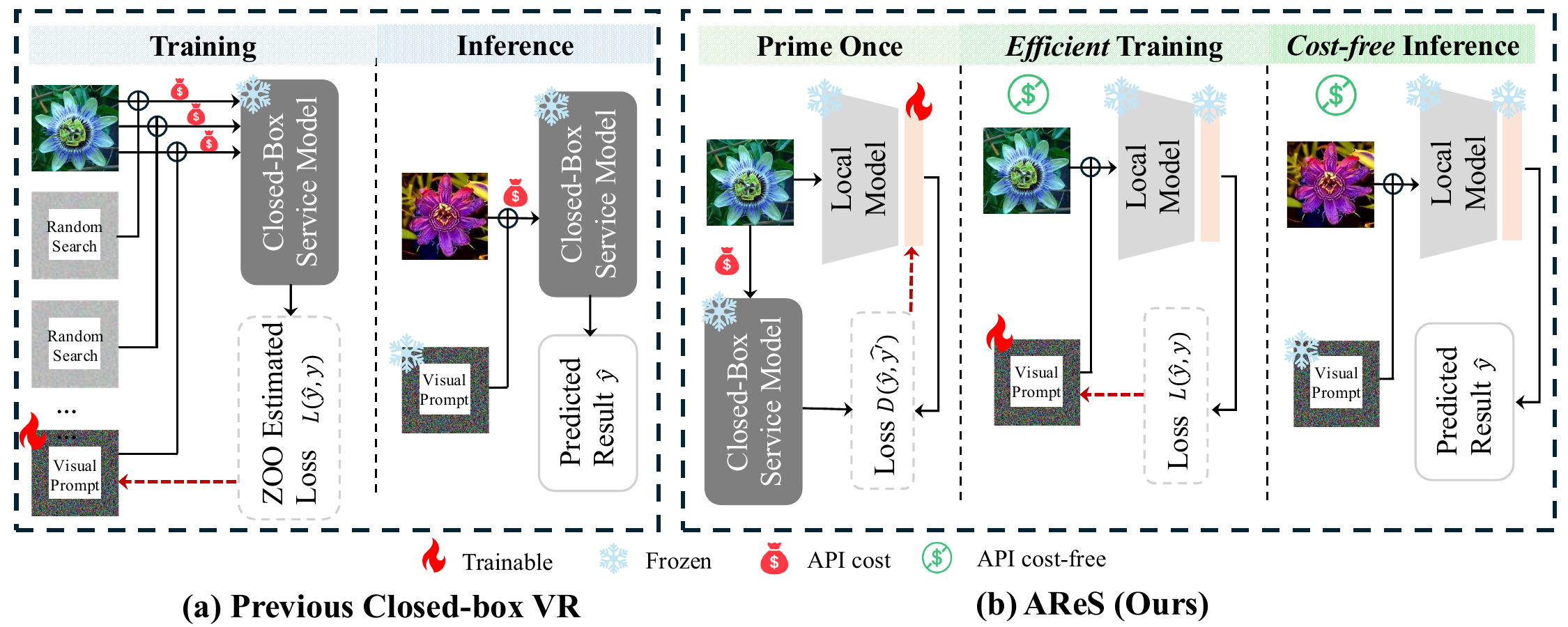}
    \vspace{-6mm}
    \caption{
    \textbf{(a)} Previous closed-box methods use Zeroth-Order Optimization (ZOO), which estimates gradients by querying the API multiple times for the same training image with random perturbations. This results in continuous API dependency: numerous calls during training and one API call per image during inference. \textbf{(b)} Our AReS approach performs a single-pass priming, requiring only one API call per training image to prepare a local model. This enables efficient, gradient-aware VR locally, eliminating all subsequent API costs during inference.}
    \label{fig:algo_motivation}
    \vspace{-4mm}
\end{figure*}

\section{Related Work}
\label{sec:related_work}
\textbf{Closed-box Tuning for Model-as-a-Service (MaaS).}
Closed-box tuning for MaaS has been explored across various modalities, with approaches for Large Language Models (LLMs) \citep{pmlr-v235-sun24p, sun2022bbt} and methods for VLMs that modify inputs at the token level while requiring access to text embeddings, placing them outside the strictest closed-box definition~\citep{guo-etal-2023-black, ouali2023black, wang2024craft, sun_ijcai22}, 
as detailed in Table~\ref{tab:bb_comparison}. 
In the vision domain, approaches like BlackVIP \citep{balckvip_CVPR23} and BAR \citep{bar_icml20} employ ZOO to reprogram models by modifying visual inputs. While applicable to both VLMs and VMs, these ZOO-based approaches are known for optimization instability and high API call costs during training and inference~\citep{9186148, balckvip_CVPR23, bar_icml20, zhang2024revisiting, park2025zip}. More critically, their reliance on input perturbations may face new challenges in the era of modern APIs, e.g., powerful service models like GPT-4o can be robust enough to simply ignore these perturbations, rendering the costly ZOO adaptation process ineffective (Fig.~\ref{fig:gpt4o}). 
These limitations highlight the need for an alternative strategy that avoids direct closed-box optimization and instead enables efficient and effective adaptation for downstream tasks without repeated service-model queries.

\noindent\textbf{Model Reprogramming (MR).}
MR~\citep{bahng2022exploring,chen2024model, jia2022visual,  zhang2022fairness}, termed Visual Reprogramming (VR) or Visual Prompting (VP) in vision domains, encompasses two main strategies. The first, token-based prompting, is tailored for transformer architectures like Vision Transformers (ViT)~\citep{bahng2022exploring, jia2022visual, NEURIPS2024_0a0eba34, zhou2022conditional, zhou2022learning, xiao2026promptbased}. These methods inject learnable tokens and require access to the model's embedding space or intermediate layers, rendering them unsuitable for strict closed-box settings or non-transformers~\citep{guo-etal-2023-black,ouali2023black,wang2024craft,sun_ijcai22, zhang2025otvp}. The second category, input-level prompting~\citep{cai2024bayesian, chen2023understanding, balckvip_CVPR23,bar_icml20,tsao2024autovp, zhang2026adapting}, directly modifies input images, offering model-agnostic flexibility with only input-output access. Both approaches can be optimized using gradient-based methods (glass-box) or ZOO techniques (closed-box)~\citep{balckvip_CVPR23, bar_icml20, wang2024craft}. 
However, ZOO-based methods can be associated with efficiency and stability trade-offs. Their reliance on high query volumes and continuous API connectivity may be less suited for practical scenarios requiring rapid adaptation or offline use, such as in remote field operations. These practical considerations particularly motivate our work in exploring an alternative strategy.

\begin{table*}[!t]
\caption{
Comparison of tuning methods for Model-as-a-Service (MaaS). The Access column indicates the strictness, distinguishing translucent-box (grey, requiring intermediate Features or Embedding access) from closed-box (black, Input/Predictions only) settings. We compare methods by: MR (Model Reprogramming-based); required Input Access and Output Access; support for VMs (Standard Vision Models) and VLMs (Vision-Language Models); Prompt type (L: Language, V: Visual); and Encoder Flexibility. Our method, AReS, uniquely offers Cost-free Inference for both VMs and VLMs while operating under the strictest closed-box constraints.
}
\vspace{-2mm}
\label{tab:bb_comparison}
\centering
\scriptsize
\resizebox{\textwidth}{!}{%
\begin{tabularx}{\textwidth}{lXcYYcccYY} 
\toprule
\textbf{Access} & \textbf{Method} & \textbf{MR} & \textbf{Input Access} & \textbf{Output Access} & \textbf{VM} & \textbf{VLM} & \textbf{Prompt} & \textbf{Encoder Flexibility} & \textbf{Cost-free Inference}\\
\midrule
\multirow{5}{*}{\recttikzl{midgray}{midgray}{1.0pt}} & LFA \citep{ouali2023black}          & \xmark & Input                   & \cellcolor{gray!30}Features      & \xmark & \cmark & --      & \cmark      & \xmark \\
&CBBT \citep{guo-etal-2023-black}    & \xmark & \cellcolor{gray!30}Embedding & \cellcolor{gray!30}Features      & \xmark & \cmark & L       & \cmark      & \xmark \\

&LaViP \citep{kunananthaseelan2024lavip}    & \cmark & Input & \cellcolor{gray!30}Features      & \cmark & \cmark & V       & \cmark      & \xmark \\

&BPT-VLM~\citep{sun_ijcai22}     & \xmark & \cellcolor{gray!30}Embedding & Predictions                  & \xmark & \cmark & L\&V    & ViT Only    & \xmark \\
&CraFT \citep{wang2024craft}         & \xmark & \cellcolor{gray!30}Embedding & Predictions                  & \xmark & \cmark & L       & \cmark      & \xmark \\
\midrule 
\multirow{4}{*}{\recttikzl{black}{black}{1.0pt}} & BAR \citep{bar_icml20}              & \cellcolor{green!20}\cmark & \cellcolor{green!20}Input      & \cellcolor{green!20}Predictions      & \cmark & \cmark & V       & \cmark      & \xmark \\
&BlackVIP \citep{balckvip_CVPR23}    & \cellcolor{green!20}\cmark & \cellcolor{green!20}Input      & \cellcolor{green!20}Predictions      & \cmark & \cmark & V       & \cmark      & \xmark \\

&LLM-Opt \citep{liu2024language}    & \xmark & \cellcolor{green!20}Input      & \cellcolor{green!20}Predictions      & \xmark & \cmark & L       & \cmark      & \xmark \\

&\textbf{AReS (Ours)}                        & \cellcolor{green!20}\cmark & \cellcolor{green!20}Input      & \cellcolor{green!20}Predictions      & \cmark & \cmark & V       & \cmark      & \cellcolor{green!20}\cmark \\
\bottomrule
\end{tabularx}
}
\vspace{-3mm}
\end{table*}

\section{Problem Setting}
\textbf{Closed-box Model Reprogramming.}
MR adapts pre-trained models to solve new tasks without modifying their parameters. Given a pre-trained model $\mathcal{F}_S: \mathcal{X}^S \mapsto \mathbb{R}^{K^S}$ where $K^S$ is the number of source classes, the key concept is to introduce a learnable prompt $\mathbf{P}$ that transforms inputs from the downstream task domain $\mathcal{X}^T$ to the source domain $\mathcal{X}^S$ through an input transformation function $g_{\mathrm{in}}(\cdot|\mathbf{P})$. Additionally, a label mapping function $g_{\mathrm{out}}:\mathbb{R}^{K^S}\mapsto\mathbb{R}^{K^T}$ is employed to bridge the source (with $K^S$ classes) and target (with $K^T$ classes) label spaces. If the source and downstream tasks have the same label space, $g_{\mathrm{out}}$ becomes an identity mapping (e.g., in VLMs). Otherwise, for VMs, a gradient-free label mapping is established following existing works \citep{cai2024bayesian, chen2023understanding, bar_icml20} (Detailed in Appendix~\ref{app:output_mapping}). Importantly, only the input transformation $g_{\mathrm{in}}$ contains trainable parameters updated through gradient descent, while the label mapping $g_{\mathrm{out}}$ is computed and updated during training but does not introduce additional trainable parameters. 

In MaaS settings, powerful pre-trained models are accessible only through APIs without access to their parameters or internal representations. We denote such a closed-box Service API model as $\mathcal{F}_S: \mathcal{X}^S \mapsto \mathbb{R}^{K^S}$, which accepts inputs from domain $\mathcal{X}^S$ and produces output predictions. Adhering to the strictest closed-box assumption established in prior work \citep{bar_icml20}, we consider the scenario where the API returns only prediction probabilities rather than logits or embeddings, representing the most constrained access scenario (see Table~\ref{tab:bb_comparison}). When adapting MaaS models to downstream tasks, the central challenge is to leverage these powerful models in a computationally and economically efficient manner. This requires developing methods that can optimize task-specific visual prompts using only the limited input-output responses from the API while minimizing computational costs and API usage during both training and inference.

\noindent\textbf{Existing BMR Approaches.} 
Existing ZOO-based BMR methods like BAR~\citep{bar_icml20} employs random gradient-free (RGF) optimization \citep{liu2018zeroth, tu2019autozoom} to estimate the gradient of the loss function $\ell$ with respect to prompt parameters $\mathbf{P}$: $\nabla \ell(\mathbf{P}) \approx \frac{d}{q}\sum_{i=1}^{q} \frac{\ell(\mathbf{P} + \mu u_i) - \ell(\mathbf{P})}{\mu}u_i\,,$
where $d$ is the parameter dimension, $q$ is the number of random directions, $\mu$ is a small constant, and $u_i$ are random unit vectors. BlackVIP~\citep{balckvip_CVPR23} introduces the Simultaneous Perturbation Stochastic Approximation with Gradient Correction (SPSA-GC) \citep{spall1992multivariate, spall1997one} to improve gradient estimation: $\nabla \ell(\mathbf{P}) \approx \frac{\ell(\mathbf{P} + c\Delta) - \ell(\mathbf{P} - c\Delta)}{2c}\Delta^{-1}\,,$
where $c$ is a small constant and $\Delta$ is a random perturbation vector. BlackVIP also incorporates a Coordinator network for input-dependent prompt generation using a local pre-trained encoder (e.g., ViT-B/16). While these methods have shown promising results, they require multiple API calls per update and during inference, leading to increased computational and financial costs. Additionally, their zeroth-order gradient approximation can result in less stable optimization compared to exact gradients (i.e., first-order optimization) \citep{9186148,balckvip_CVPR23,  zhang2024revisiting}. The challenge remains to develop more efficient approaches that can better leverage available local encoders while minimizing API usage.

\begin{table*}[t]
\caption{Accuracy and Efficiency comparison on ten diverse visual recognition datasets using CLIP ViT-B/16 (Service) and ViT-B/16 (Local) in a 16-shot setting. AReS achieves superior average accuracy with a drastic reduction in resources (99.99\% fewer API calls and 98\% less training time). For this and all subsequent tables, \textcolor{gray!50}{Gray} denotes glass-box Service VR results. \#API (M): total API calls in millions (training and inference); Wall-clock Time (h): total training hours run on an RTX 6000 Ada GPU.}
\vspace{-2mm}
\centering
\scriptsize
\resizebox{\textwidth}{!}{
\begin{tabular}{l|cccccccccc|c|r|r}
\toprule
Method & Flowers & DTD & UCF & Food & GTSRB & EuroSAT & Pets & Cars & SUN & SVHN &  Avg. & \#API (M) & Time (h) \\
\midrule
VR (glass-box)  
& \textcolor{gray!50}{86.8} & \textcolor{gray!50}{62.0} & \textcolor{gray!50}{74.0} & \textcolor{gray!50}{81.6} & \textcolor{gray!50}{65.1} & \textcolor{gray!50}{90.9} & \textcolor{gray!50}{90.7} & \textcolor{gray!50}{66.2} & \textcolor{gray!50}{66.7} & \textcolor{gray!50}{60.2} & \textcolor{gray!50}{74.4} & \textcolor{gray!50}{16.82} & \textcolor{gray!50}{9.8} \\
\midrule
Zero-shot 
& \underline{71.3} & 43.9 & 66.9 & \textbf{85.9} & 21.0 & 47.9 & \textbf{89.1} & \underline{65.2} & 62.6 & 17.9 & 57.2 & 0.12 & 0 \\
BAR 
& 71.0 & \underline{46.8} & 64.2 & 84.4 & \underline{21.5} & \underline{77.3} & 88.4 & 63.0 & 62.4 & 34.6 & 61.4 & 612.84 & 185.6 \\
BlackVIP 
& 70.6 & 45.3 & \textbf{68.7} & \textbf{85.9} & 21.3 & 73.3 & \textbf{89.1} & \textbf{65.4} & \textbf{64.5} & \underline{44.4} & 62.9 & 754.20 & 197.5 \\
LLM-Opt &
67.6&	45.0&	59.9&	78.5&	21.2&	48.0&	87.7&	56.2&	60.3&	20.2&	54.5&5011.32&5.1 \\
\cellcolor{gray!30}\textbf{AReS}     
& \cellcolor{gray!30}\textbf{86.6} & \cellcolor{gray!30}\textbf{48.2} & \cellcolor{gray!30}\underline{67.1} & \cellcolor{gray!30}68.8 & \cellcolor{gray!30}\textbf{39.4} & \cellcolor{gray!30}\textbf{85.7} & \cellcolor{gray!30}88.9 & \cellcolor{gray!30}43.2 & \cellcolor{gray!30}\underline{62.8} & \cellcolor{gray!30}\textbf{63.2} & \cellcolor{gray!30}\underline{65.4} & \cellcolor{gray!30}\textbf{0.02} & \cellcolor{gray!30}\textbf{3.7} \\

\cellcolor{gray!30}\textbf{AReS-MS}     
& \cellcolor{gray!30}\textbf{86.6} & \cellcolor{gray!30}\textbf{48.2} & \cellcolor{gray!30}\underline{67.1} & \cellcolor{gray!30}\textbf{85.9} & \cellcolor{gray!30}\textbf{39.4} & \cellcolor{gray!30}\textbf{85.7} & \cellcolor{gray!30}88.9 & \cellcolor{gray!30}\underline{65.2} & \cellcolor{gray!30}\underline{62.8} & \cellcolor{gray!30}\textbf{63.2} & \cellcolor{gray!30}\textbf{69.3} & \cellcolor{gray!30}\textbf{0.06} & \cellcolor{gray!30}\textbf{3.7} \\
\bottomrule
\end{tabular}
}
\vspace{-3mm}
\label{tab:clip_res}
\end{table*}

\section{Method}
\textbf{Prime Once.}
In the first stage, we utilize a locally available encoder $\mathcal{F}_L$ (the same assumption as BlackVIP \citep{balckvip_CVPR23}) and prime it using the closed-box service model $\mathcal{F}_S$. For each image $x_i$ from the target domain dataset $\mathcal{D}^T = \{x_i\}_{i=1}^N$, we query the service model to obtain prediction probabilities ${p}_S(x_i) = \mathcal{F}_S(x_i)$. It is important to note that this is the \emph{only time} we interact with the API, and after this single round of querying, the API is \emph{no longer needed} for the rest of the process.
We use the local encoder $\mathcal{F}_L$ (outputting $d_{\mathrm{enc}}$-dim features) as a fixed feature extractor and add a trainable linear layer with parameters $\theta \in \mathbb{R}^{K^S \times (d_{\mathrm{enc}}+1)}$ on top of it. Note that we abuse the notation $\mathcal{F}_L$ to refer to both the local encoder alone and the local encoder combined with the learned linear layer, depending on the context. Only the linear layer parameters are updated during priming:
\begin{equation} 
\label{eq:distillation_loss}
\scalebox{0.95}{$
    \theta^* = \arg\min_\theta \sum_{i=1}^N \mathcal{L}_{\text{P}}(\mathcal{F}_L(x_i; \theta), {p}_S(x_i)) \,.
$}
\end{equation}
Our priming objective, inspired by the loss function from knowledge distillation \citep{hinton2015distilling, stanton2021does}, is defined as: $\mathcal{L}_{\text{P}}({p}_L, {p}_S) := -\sum_{j=1}^{K^S} {p}_{S,j} \log {p}_{L,j}\,,$
where ${p}_S/{p}_L$ are the output probabilities of the same image from the service/local models, respectively. This loss is equivalent to the Kullback-Leibler divergence $\text{KL}({p}_S||{p}_L)$.

For both VMs and VLMs, we freeze the local encoder and only train the added linear layer. The key difference between VM and VLM adaptation is in how the outputs are handled: for VMs, the priming occurs in the \emph{service model label probability space}, which presents a non-trivial challenge as this fixed vocabulary does not align with the downstream task's labels. For VLMs, the outputs are naturally aligned with the downstream task label space through the text encoder. This approach establishes a fair comparison with BlackVIP~\citep{balckvip_CVPR23}, as both methods only require a local encoder. 
Moreover, this opens up the potential to utilize extra \textbf{unlabeled} data from the downstream distribution, a significant advantage unavailable to other methods. (see Fig.~\ref{fig:ablation_extra_distil}).

\begin{remark}[Priming vs. Knowledge Distillation] \label{sec:remark_kd}
Our priming stage is mechanistically similar to knowledge distillation \citep{ba2014deep, chen2017learning, hinton2015distilling, hsu2021generalization, ji2020knowledge, phuong2019towards} but differs fundamentally in its objective and applicability. Whereas traditional distillation aims to create a final, high-performance student model---typically requiring a shared label space---our priming is solely a preparatory step to make the local model more amenable to subsequent reprogramming. This preparatory focus allows priming to operate even when the service and downstream label spaces are disjoint (e.g., using Flowers data in the ImageNet label space), providing crucial flexibility for closed-box model adaptation.
\end{remark}

\noindent\textbf{Efficient VR Training.}
After the priming, we proceed to the second stage of AReS: learning an effective visual prompt for the downstream task using gradient-based optimization. Unlike closed-box approaches that rely on approximate gradients, we can leverage exact gradient information since our local encoder with the learned linear layer is fully accessible.
We define a learnable visual prompt $\mathbf{P}$ and an input transformation function $g_{\mathrm{in}}$ similar to closed-box methods, along with a label mapping function $g_{\mathrm{out}}$ (and becomes an identity mapping for VLMs). We optimize the prompt using gradient descent on our primed local encoder by minimizing the cross-entropy loss:
\begin{equation} 
\label{eq:vr_training}
\scalebox{0.85}{$
    \mathbf{P}^* = \arg\min_{\mathbf{P}} \mathbb{E}_{(x,y) \sim \mathcal{D}^T} [\ell(g_{\mathrm{out}}(\mathcal{F}_L(g_{\mathrm{in}}(x, \mathbf{P}); \theta^*)), y)]\,.
$}
\end{equation}
This gradient-based optimization on local model enables more stable, efficient, and rapid learning of the visual prompt compared to ZOO-based methods, resulting in better and faster adaptation to the downstream task while maintaining the benefits of input-level modifications.

\noindent\textbf{Cost-free Inference.}
For inference on test examples from the downstream task, we apply the learned visual prompt to the input and process it through the local encoder $\hat{y} = g_{\mathrm{out}}(\mathcal{F}_L(g_{\mathrm{in}}(x, \mathbf{P}^*); \theta^*))$.
This approach eliminates API dependency during inference, enabling truly cost-free deployment. By removing the need for constant connectivity to service models, AReS becomes particularly valuable in practical scenarios like remote field operations or edge devices with intermittent network access.

\noindent\textbf{Model Selection Strategy.} For applications where maximizing performance is the top priority, we also propose a simple variant named \textbf{AReS-MS} (Model Selection). This approach leverages the initial priming stage as a low-cost diagnostic tool, as it yields both the zero-shot performance of the service API and the potential performance of the primed local model. AReS-MS then automatically selects the optimal inference path: if the local model's potential meets the required performance threshold, defined as the zero-shot performance minus a tolerance $\tau$, it proceeds with the cost-free local model; otherwise, it defaults to using the more powerful zero-shot API. This turns AReS into a more complete and robust framework that not only enables efficient adaptation but also guides practitioners to the optimal cost-performance trade-off for each individual task.

\noindent\textbf{Theoretical Insight.}
To formally understand AReS's efficacy, we provide a theoretical analysis (detailed in Appendix~\ref{app:theoretical_analysis}) that bounds the performance of our primed local model, $\mathcal{F}_L$, relative to the service model, $\mathcal{F}_S$. This analysis hinges on two key assumptions: \textit{Service Model Superiority} (Assumption~\ref{app:assum_service_model_superiority}), which reasonably posits that $\mathcal{F}_S$ is more powerful or better aligned with the downstream task data $\mathcal{D}^T$ than $\mathcal{F}_L$ initially, and \textit{$\epsilon$-Faithful Priming} (Assumption~\ref{app:assum_faithful_distill} and Definition~\ref{app:def_faithful_distill}). The latter states our one-time priming effectively aligns the output logit distributions of $\mathcal{F}_L$ and $\mathcal{F}_S$, with their expected $L_1$ difference bounded by a small $\epsilon$, indicating a successful knowledge transfer. While our theoretical analysis assumes \emph{logits} for tractability, our ablation studies (Fig.~\ref{fig:ablation_ditil_loss}) confirm that the performance of our practical, probability-only method is unaffected.

Leveraging these assumptions and the Lipschitz continuity of the cross-entropy loss (Lemma~\ref{app:lemma_lipschitz_cross_entropy_general}), Theorem~\ref{app:thm_bound_service_model_perf} establishes a crucial performance bound. It shows that the risk of the optimally reprogrammed local model, $\mathcal{R}_L(\mathcal{D}^T, \mathbf{P}^*)$, can closely approach that of an optimally reprogrammed service model, $\mathcal{R}_S(\mathcal{D}^T, \mathbf{Q}^*)$, differing by at most $\epsilon$:
\begin{equation} \label{eq:theo_bound}
\scalebox{0.9}{$
    \mathcal{R}_L(\mathcal{D}^T, \mathbf{P}^*) - \epsilon \le \mathcal{R}_S(\mathcal{D}^T, \mathbf{Q}^*) \le \mathcal{R}_L(\mathcal{D}^T, \mathbf{P}^*)\,.
    $}
\end{equation}
This result offers a key insight: traditional ZOO-based methods expend considerable resources attempting to directly optimize for a low $\mathcal{R}_S(\mathcal{D}^T, \mathbf{Q}^*)$ via unstable, query-heavy processes on $\mathcal{F}_S$. AReS, however, transforms this challenge. By first achieving a faithful priming (small $\epsilon$), the problem shifts to optimizing $\mathcal{R}_L(\mathcal{D}^T, \mathbf{P}^*)$ on the local, glass-box model $\mathcal{F}_L$. This local optimization using efficient First-Order Optimization (FOO) is more stable and drastically reduces API calls and computation, thus providing a more practical path to leveraging the service model's capabilities.

\begin{table*}[t]
\caption{Accuracy and Efficiency comparison using ViT-B/16 (Service) in Full-shot setting.}
\vspace{-2mm}
\centering
\scriptsize
\resizebox{\textwidth}{!}{
\begin{tabular}{l|cccccccccc|c|r|r}
\toprule
Method & Flowers & DTD & UCF & Food & GTSRB & EuroSAT & Pets & Cars & SUN & SVHN &  Avg. & \#API (M) & Time (h) \\
\midrule
VR (glass-box)  
& \textcolor{gray!50}{69.6} & \textcolor{gray!50}{52.1} & \textcolor{gray!50}{49.7} & \textcolor{gray!50}{38.7} & \textcolor{gray!50}{60.6} & \textcolor{gray!50}{96.7} & \textcolor{gray!50}{77.1} & \textcolor{gray!50}{6.2} & \textcolor{gray!50}{34.0} & \textcolor{gray!50}{83.3} & \textcolor{gray!50}{56.8} & \textcolor{gray!50}{43.4} & \textcolor{gray!50}{39.6} \\
\midrule
BAR  
& 14.8 & 24.3 & 29.5 & 12.2 & 14.1 & 43.1 & 24.9 & 1.0 & 10.6 & 29.6 & 20.4 & 1,724.2 & 668.2 \\
BlackVIP (RN50)
& 15.2 & 42.3 & 34.0 & 14.5 & 13.9 & 67.3 & 63.6 & 3.7 & 22.6 & 30.8 & 30.8 & 2,586.2 & 686.9\\
\cellcolor{gray!30}\textbf{AReS (RN50)} 
& \cellcolor{gray!30}\textbf{43.5} & \cellcolor{gray!30}\textbf{43.4} & \cellcolor{gray!30}\textbf{39.4} & \cellcolor{gray!30}\textbf{25.9} & \cellcolor{gray!30}\textbf{48.7} & \cellcolor{gray!30}\textbf{84.3} & \cellcolor{gray!30}\textbf{73.9} & \cellcolor{gray!30}\textbf{5.8} & \cellcolor{gray!30}\textbf{19.9} & \cellcolor{gray!30}\textbf{73.7} & \cellcolor{gray!30}\textbf{45.9} 
& \cellcolor{gray!30}\textbf{0.2} & \cellcolor{gray!30}\textbf{8.7} \\
BlackVIP (ViT-B/32)
& 27.4 & 44.1 & 36.6 & 18.5 & 38.4 & 59.6 & 65.4 & 3.9 & 23.7 & 30.3 & 34.8 & 2,586.2 & 757.6 \\
\cellcolor{gray!30}\textbf{AReS (ViT-B/32)}     
& \cellcolor{gray!30}\textbf{53.6} & \cellcolor{gray!30}\textbf{46.9} & \cellcolor{gray!30}\textbf{41.4} & \cellcolor{gray!30}\textbf{27.3} & \cellcolor{gray!30}\textbf{58.8} & \cellcolor{gray!30}\textbf{95.6} & \cellcolor{gray!30}\textbf{65.0} & \cellcolor{gray!30}\textbf{4.7} & \cellcolor{gray!30}\textbf{26.9} & \cellcolor{gray!30}\textbf{83.5} & \cellcolor{gray!30}\textbf{50.4} & \cellcolor{gray!30}\textbf{0.2} & \cellcolor{gray!30}\textbf{22.4} \\
\bottomrule
\end{tabular}
}
\label{tab:vitb16_res}
\vspace{-3mm}
\end{table*}

\section{Experiments}
\label{sec:exp}
\textbf{Datasets and Models.} 
We evaluate AReS on ten diverse visual recognition datasets widely used in previous works~\citep{balckvip_CVPR23, cai2024bayesian}, spanning various domains: fine-grained categorization (Flowers102~\citep{flowers102}, StanfordCars~\citep{stanfordcars}), texture recognition (DTD~\citep{DTD}), action recognition (UCF101~\citep{UCF101}), food classification (Food101~\citep{Food101}), traffic sign recognition (GTSRB~\citep{GTSRB}), satellite imagery (EuroSAT~\citep{EuroSAT}), animal recognition (OxfordPets~\citep{Oxfordpets}), scene classification (SUN397~\citep{sun397}), and digit recognition (SVHN~\citep{SVHN}). We compare our approach with SOTA closed-box VR methods: BAR \citep{bar_icml20} and BlackVIP \citep{balckvip_CVPR23}. For the VLM experiments, we additionally compare against LLM-Opt~\citep{liu2024language}, a recent baseline applicable only to VLMs. For VLMs, we use CLIP ViT-B/16~\citep{CLIP} as the service model and ViT-B/16 as the local encoder in a 16-shot learning setting, following the same configuration as BlackVIP~\citep{balckvip_CVPR23} due to VLMs' strong generalization capabilities. We also include zero-shot CLIP performance as an important baseline to measure adaptation gains. For VMs, we employ pre-trained ViT-B/16~\citep{ViT} and ResNet101 (RN101)~\citep{resnet} as service models with ViT-B/32 and ResNet50 (RN50) as local encoders, adopting the Full-shot setting following BAR~\citep{bar_icml20} since VMs are not as generalizable as VLMs. For all settings using VMs as a service, we employ Bayesian-guided Label Mapping (BLM)~\citep{cai2024bayesian} as $g_{\mathrm{out}}$.  In addition, we further evaluate our methods on the MLLM LLaVA~\citep{llava}, the real-world API GPT-4o~\citep{hurst2024gpt}, and the commercial VM API service Clarifai. Additional details can be found in Appendix~\ref{app:exp_setting}.

\noindent\textbf{Implementation Details and Evaluation Metrics.} 
For the priming stage in AReS, we use the Adam optimizer \citep{adam} with a learning rate of 0.001. For local encoder VR, we follow the implementation in \citep{cai2024bayesian}, using a padding-based visual prompt approach and the Adam optimizer with a learning rate of 0.01. For baselines, we use the officially released implementations with their recommended hyperparameter settings. All experiments are conducted over \emph{three} rounds on a single NVIDIA RTX 6000 Ada GPU. We evaluate each method using three key metrics to assess practical deployment value: (1) Average accuracy across three runs on downstream tasks (\textbf{\%}), (2) Total training and inference API calls required in millions (\textbf{\#API (M)}), and (3) Total wall-clock training time in hours (\textbf{Time (h)}) across all datasets. For experiments on real-world APIs, we also report the total financial cost in US dollars (\$) for both training and inference.

\subsection{Experimental Results}
\textbf{Results on Vision-Language Model (VLMs).}
\label{subsec:vlm_res}
Following BlackVIP \citep{balckvip_CVPR23}, we evaluate our approach using a CLIP ViT-B/16 service model and a ViT-B/16 local encoder, with results presented in Table~\ref{tab:clip_res}. Our method, \textbf{AReS}, achieves a 65.4\% average accuracy across ten diverse datasets, outperforming both BAR (61.4\%) and BlackVIP (62.9\%). We also compare against LLM-Opt~\citep{liu2024language}, a recent method that uses GPT-4 to optimize text prompts; despite its extremely high API costs for both CLIP (5011.3M API calls) and the LLM optimizer (additional $\sim$\$110 USD), it underperforms the zero-shot baseline and is only applicable to VLMs. This superior performance by AReS is achieved with a drastic reduction in computational requirements: only 0.02M API calls for the initial priming, compared to 612.84M for BAR and 754.20M for BlackVIP (a reduction of $>99.99\%$). Furthermore, inferring all datasets' test sets (e.g., zero-shot) requires 0.12M API calls; our method uses only approximately 1/6 of these calls for priming and then requires no API calls during inference, while still outperforming zero-shot by $+8.2\%$ on average. Training time is also significantly reduced to only 3.7 hours, compared to 185.6 and 197.5 hours for BAR and BlackVIP, respectively (a reduction of $>98\%$).
These efficiency gains are particularly notable on challenging datasets such as SVHN (63.2\% vs. 44.4\% for BlackVIP and 34.6\% for BAR) and EuroSAT (85.7\% vs. 73.3\% for BlackVIP and 77.3\% for BAR). Our performance on Flowers102 is also strong at 86.6\%, substantially outperforming BlackVIP (70.6\%) and BAR (71.0\%), and closely approaching the glass-box VR performance (86.8\%) without direct use of the service model.

On certain complex datasets, notably Food101 and Cars, we observe that all evaluated reprogramming methods, including our own (with a failure case analysis provided in Appendix~\ref{app:failure_analysis}), struggle to outperform the strong zero-shot CLIP baseline. This suggests a potential inherent limitation of input-level visual reprogramming for these specific domains. This hypothesis is strongly supported by the results in Table~\ref{tab:clip_res}, where a fully glass-box VR approach is shown to be ineffective, even causing a 4.3\% performance \emph{decrease} on Food101 compared to the zero-shot baseline. This insight motivates our \textbf{AReS-MS} variant, which functions as an intelligent model selection strategy. By using the priming stage to identify these challenging datasets and defaulting to the zero-shot API, AReS-MS boosts the average accuracy to a new state-of-the-art 69.3\% (Table~\ref{tab:clip_res}), effectively navigating the cost-performance trade-off by applying reprogramming only where it is beneficial.

\begin{table*}[t]
\caption{Performance and Cost comparison on the open-source MLLM LLaVA, the proprietary API GPT-4o, and the commercial VM API Clarifai, evaluated on the EuroSAT 16-shot benchmark. A zero-shot baseline is not applicable for Clarifai as it is a closed-vocabulary model. Costs are shown in total API calls (\#) or US dollars (\$). AReS demonstrates significant gains where ZOO-based methods struggle.
}
\label{tab:mllm_api}
\vspace{-2mm}
\centering
\scriptsize
\resizebox{\textwidth}{!}{
\renewcommand{\arraystretch}{0.9}
\begin{tabular}{l|c|rrr|c|rrr|c|rrr}
\toprule
\multirow{2}{*}{Method} 
& \multicolumn{4}{c|}{\textbf{LLaVA}} 
& \multicolumn{4}{c|}{\textbf{GPT-4o}} 
& \multicolumn{4}{c}{\textbf{Clarifai}} \\
\cmidrule(lr){2-5}\cmidrule(lr){6-9}\cmidrule(lr){10-13}
& Acc. $\uparrow$ & Train & Infer. & Total (\#) $\downarrow$ 
& Acc. $\uparrow$ & Train & Infer. & Total (\$) $\downarrow$ 
& Acc. $\uparrow$ & Train & Infer. & Total (\$) $\downarrow$ \\
\midrule
glass-box
& \textcolor{gray!50}{91.3} & \textcolor{gray!50}{--} & \textcolor{gray!50}{--} & \textcolor{gray!50}{--} 
& \textcolor{gray!50}{--} & \textcolor{gray!50}{--} & \textcolor{gray!50}{--} & \textcolor{gray!50}{--} 
& \textcolor{gray!50}{--} & \textcolor{gray!50}{--} & \textcolor{gray!50}{--} & \textcolor{gray!50}{--} \\
\midrule
Zero-shot 
& 40.1 & -- & 8100 & 8100 
& 59.4 & -- & 14.6 & 14.6 
& -- & -- & -- & -- \\
BAR       
& \textcolor{red}{34.1} & $\sim 10^6$ & 8100 & $\sim 10^6$ 
& \textcolor{red}{59.1} & 57.6 & 14.6 & 72.2 
& 68.1 & 38.4 & 9.7 & 48.1 \\
BlackVIP  
& \textcolor{red}{39.4} & $\sim 10^7$ & 8100 & $\sim 10^7$ 
& \textcolor{customgreen}{60.1} & 86.4 & 14.6 & 101.0 
& 72.1 & 57.6 & 9.7 & 67.3 \\
\rowcolor{gray!50} 
\textbf{AReS}     
& \textbf{\textcolor{customgreen}{73.1}} & 160 & 0 & \textbf{160} 
& \textbf{\textcolor{customgreen}{87.2}} & 0.3 & 0 & \textbf{0.3} 
& \textbf{83.2} & 0.2 & 0 & \textbf{0.2} \\
\bottomrule
\end{tabular}
}
\vspace{-2mm}
\end{table*}

\noindent\textbf{Results on Vision Models (VMs).}
\label{subsec:res_vm}
We further validate our approach using ViT-B/16 as the closed-box service model, paired with ViT-B/32 and RN50 local encoders (details in Table~\ref{tab:vitb16_res}; results using an RN101 service model are in Appendix Table~\ref{tab:rn101_res}). Our method consistently outperforms BAR and BlackVIP across all configurations. As shown in the main results, we achieve an average accuracy of 50.4\% (with a ViT-B/32 local encoder), compared to significantly lower accuracy for BAR (20.4\%) and BlackVIP (34.8\%). This superior performance comes with drastically reduced resource requirements; in the full-shot setting, our method uses less than 0.02\% of the API calls and only 3-7\% of the computation time required by competitors like BlackVIP, which demands over 2500M API calls and 750 hours of training.

To further analyze the effectiveness of our priming stage, we compare AReS against a direct glass-box VR baseline performed on the local model. Surprisingly, simply reprogramming the local encoder directly already outperforms the complex ZOO-based BlackVIP (e.g., 45.3\% vs. 34.8\% for the ViT-B/32 local model in Appendix Table~\ref{tab:vm_local_vr_fullshot}). Our priming process then further elevates this strong baseline by a significant margin (e.g., an additional $+5.1\%$ to 50.4\%). Moreover, our approach often exhibits synergistic effects, surpassing both the local VR baseline and the estimated glass-box performance of the service model on several datasets. For instance, with an RN101 service and ViT-B/32 local model, on Flowers102 our method achieves 51.3\% (vs. 42.3\% service glass-box vs. 38.7\% local VR). A detailed analysis is in Appendix Sec.~\ref {sec:app_synergy}. This demonstrates our method's potential to effectively combine knowledge from both models for enhanced reprogramming outcomes with high efficiency.

\noindent\textbf{Results on MLLMs and Real-World APIs.}
To validate AReS’s practical effectiveness and efficiency, we conducted a targeted evaluation on modern services using the EuroSAT 16-shot benchmark. We test AReS against baselines on three distinct models: the open-source MLLM LLaVA, the proprietary API GPT-4o, and the commercial VM API Clarifai. For the commercial APIs, we report the real-world cost in USD, for which glass-box baselines are unavailable. AReS demonstrates substantial gains in both accuracy and efficiency in Table~\ref{tab:mllm_api}, confirming its value in real-world settings.

AReS's strong performance stems from its conceptual strategy shift, which circumvents challenges faced by ZOO-based methods on modern models. On LLaVA, for example, AReS avoids using input perturbations that appear to disrupt the model's delicate vision-language alignment, a problem that causes BAR and BlackVIP's performance to drop below the Zero-shot baseline. Similarly, on the robust GPT-4o, where ZOO-based methods struggle to find effective perturbations, AReS's local reprogramming remains highly effective, achieving a remarkable +27.8\% gain over Zero-shot. On the commercial Clarifai API, a \emph{closed-vocabulary} model where zero-shot is not an option, AReS again achieves the highest accuracy (83.2\%) at a fraction of the cost (\$0.2), making it over \emph{300}x \emph{cheaper} than BlackVIP (\$67.3). These results confirm that AReS is a robust, effective, and economically viable solution for adapting modern closed-box models. More details in the Appendix Sec.~\ref{app:mllm_gpt}.

\begin{figure*}[t]
    \centering
    \subfloat[Data Fraction]{
      \begin{minipage}[b]{0.24\linewidth} 
        \centering  
        \includegraphics[width=\linewidth]{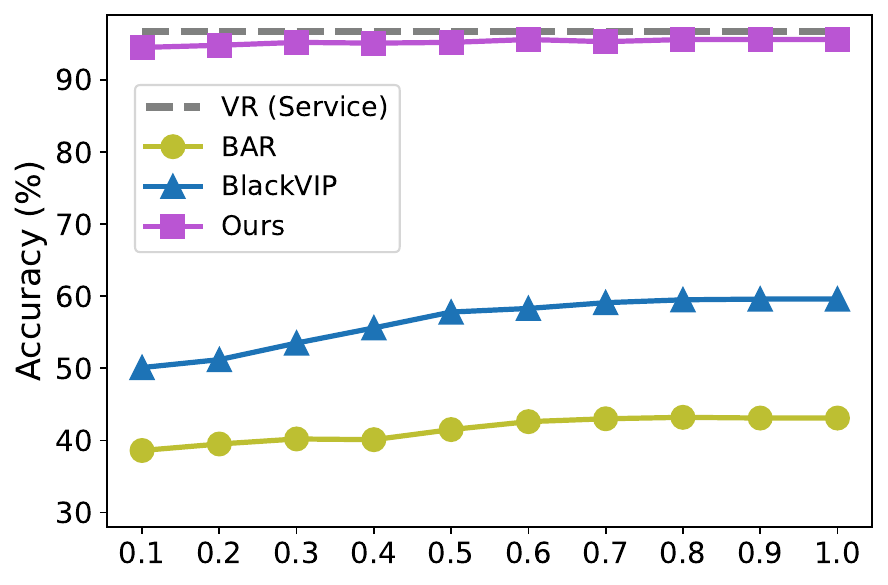} 
        \vspace{-4.5mm}
        \label{fig:ablation_ratio}
      \end{minipage}
    }
    \subfloat[Few-Shot]{
      \begin{minipage}[b]{0.24\linewidth} 
        \centering  
        \includegraphics[width=\linewidth]{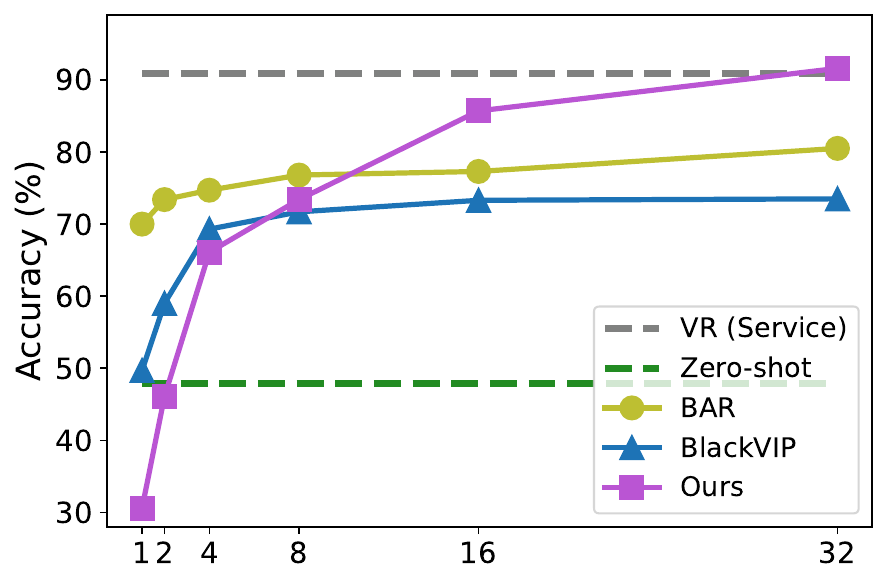} 
        \vspace{-4.5mm}
        \label{fig:ablation_shot}
      \end{minipage}%
    }
    \subfloat[Priming Loss]{
      \begin{minipage}[b]{0.24\linewidth} 
        \centering  
        \includegraphics[width=\linewidth]{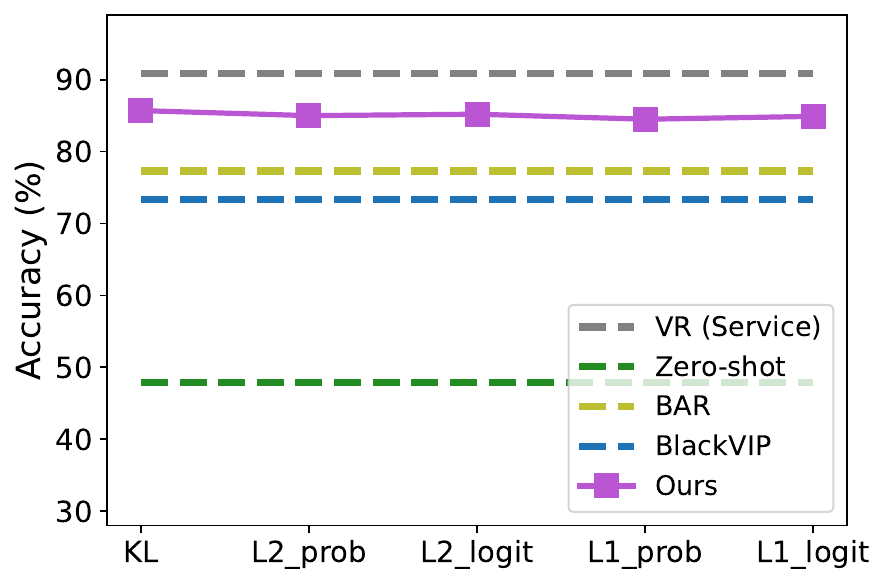}   
        \vspace{-4.5mm}
        \label{fig:ablation_ditil_loss}
      \end{minipage}
    }
    \subfloat[Extra Unlabeled Data]{
      \begin{minipage}[b]{0.24\linewidth} 
        \centering  
        \includegraphics[width=\linewidth]{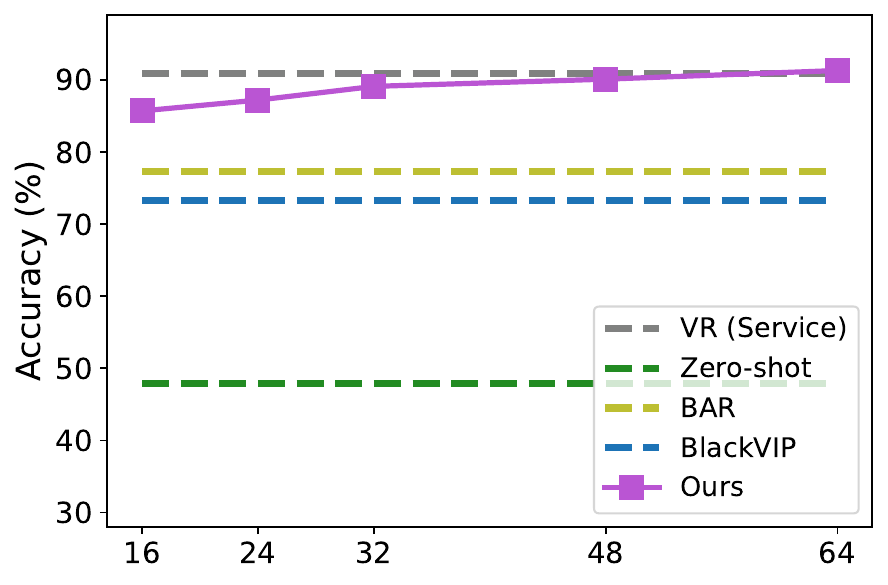}   
        \vspace{-4.5mm}
        \label{fig:ablation_extra_distil}
      \end{minipage}
    }
    \vspace{-1.5mm}
    \caption{Ablation studies for AReS on the EuroSAT dataset: impact of (a) training data fraction (for VMs), (b) few-shot sample size (for VLMs), (c) priming loss function, and (d) amount of extra unlabeled data for priming on accuracy (\%). Default setting involves 16-shot learning, CLIP ViT-B/16 Service, and ViT-B/16 Local encoder unless otherwise indicated by the subplot analysis.}
    \vspace{-2mm}
\end{figure*}

\subsection{Ablation Study}\label{sec:ablation}
We present ablation studies in this section. All experiments use EuroSAT with 16 shots per class, CLIP ViT-B/16 (Service) and ViT-B/16 (Local), unless specified otherwise.

\begin{table}[t]
\centering
\caption{Ablation study across Service/Local model backbones.}
\vspace{-2mm}
\scriptsize
\label{tab:ablation_backbone}
\begin{tabular}{l|cccc|c}
\toprule
\textbf{Method} & \multicolumn{4}{c|}{\textbf{Service Backbone}} &  \\
 & RN50 & RN101 & ViT-B/32 & ViT-B/16 &  Avg.\\
\midrule
ZS & 37.5 & 32.6 & 45.2 & 47.9 & 40.8 \\
BAR & 26.9 & 33.5 & 70.3 & 77.2 & 52.0 \\
\midrule
BlackVIP (RN50) & 51.3 & 50.8 & 62.9 & 68.5 & 58.4 \\
\rowcolor{gray!30}
\textbf{AReS} (RN50) &  {\textbf{62.2}} &  {\textbf{68.5}} &  {\textbf{63.2}} &  {\textbf{78.6}} &  {\textbf{68.1}} \\
\midrule
BlackVIP (ViT-B/16) & 48.4 & 51.3 & 67.9 & 73.3 & 60.2 \\
\rowcolor{gray!30}
\textbf{AReS} (ViT-B/16) &  {\textbf{81.8}} &  {\textbf{81.9}} &  {\textbf{82.0}} &  {\textbf{85.7}} & \textbf{82.9} \\
\bottomrule
\end{tabular}
\vspace{-2mm}
\end{table}

\noindent\textbf{Backbone Versatility.} We examine the effect of varying service model backbones and local encoder choices to demonstrate our approach's versatility when the service model is VLM.
The results in Table~\ref{tab:ablation_backbone} indicate that our method, whether using a ViT-B/16 or an RN50 local encoder, consistently outperforms baselines like BAR and BlackVIP across diverse CLIP backbones (RN50, RN101, ViT-B/32, and ViT-B/16) used in VLM contexts. For instance, averaged across these service backbones, our method with a ViT-B/16 local encoder achieves an average accuracy of 82.9\%, substantially higher than BlackVIP with the same local encoder (60.2\%). This underscores the robustness of our method across different architectural choices.

\noindent\textbf{Component Analysis.} We dissect the contributions of our core components in Table~\ref{tab:ablation_component}: Priming, Visual Reprogramming (VR). We observe that priming the local encoder alone ("Primed") yields 45.6\%, which is slightly below the 47.9\% zero-shot baseline. It is important to note that this Primed result is not always directly comparable; the priming occurs in the service model's label space (e.g., ImageNet), which can be disjoint from the target task's labels (e.g., EuroSAT). This preparatory focus, which does not require a shared label space, fundamentally distinguishes our approach from conventional knowledge distillation. While the Primed model's accuracy is lower than the zero-shot baseline, an expected outcome, as its performance is bounded by the service model, the true goal is to prepare the local model for reprogramming. Our final AReS result (85.7\%) demonstrates that the local model's reprogrammability is significantly enhanced by this step. 
Local adaptation methods without priming show varied success, with Local VR (70.6\%), Local LP (73.8\%), and their combination ``Local VR+LP" (80.1\%) all improving on the baseline. However, our full AReS approach, which combines Priming with Reprogramming, achieves the highest accuracy of 85.7\%. 
This result demonstrates that both components are crucial, outperforming all local-only tuning baselines, and confirms that the priming stage synergistically prepares the local model for a more effective reprogramming.

\begin{table}[t]
\centering
\caption{Component analysis of Priming, Visual Reprogramming (VR), 
and Linear Probing (LP).}
\label{tab:ablation_component}
\vspace{-2mm}
\scriptsize
\resizebox{\columnwidth}{!}{
\begin{tabularx}{\columnwidth}{X|YcY|Y}
\toprule
\textbf{Method} & \textbf{Priming} & \textbf{VR} & \textbf{LP} & \textbf{Accuracy} \\
\midrule
Zero-shot        & --      & --      & --      & 47.9 \\
Primed               & \cmark  & --      & --      & 45.6 \\
Local VR         & --      & \cmark  & --      & 70.6 \\
Local LP         & --      & --      & \cmark  & 73.8 \\
Local VR+LP      & --      & \cmark  & \cmark  & 80.1 \\
\midrule
\rowcolor{gray!30}
\textbf{AReS (Ours)} & \cmark & \cmark & -- & \textbf{85.7} \\
\bottomrule

\end{tabularx}
}
\vspace{-5mm}
\end{table}
\noindent\textbf{Impact of Training Data Size.} The impact of training data size is investigated for both VMs and VLMs. For VMs, Fig.~\ref{fig:ablation_ratio} shows our method consistently outperforms BAR and BlackVIP, maintaining significant performance even when data is scarce. More VM evaluation in the 16-shot setting is provided in Appendix Sec.~\ref{app:vm_fewshot}. For VLMs, varying the number of samples per class in a few-shot setting is presented in Fig.~\ref{fig:ablation_shot}. While at very low shot counts (e.g., $<8$ shots), the performance of our method can be comparable or slightly underperform some baselines due to suboptimal priming and local VR training from extreme data scarcity, it stabilizes quickly. From 8 shots onwards, our method consistently and significantly outperforms baselines, and with more data (e.g., 32-shots), its performance can be further improved to levels comparable to glass-box service model VR.

\noindent\textbf{Priming Strategies.} We explore different priming loss functions and the utility of extra unlabeled data, with results in Fig.~\ref{fig:ablation_ditil_loss} and Fig.~\ref{fig:ablation_extra_distil} respectively. Among various priming losses (KL divergence, L1/L2 losses on probabilities or logits), KL divergence, used in our main experiments, provides strong and stable performance. While we operate under the strict closed-box assumption (probabilities only), logits-based losses are included for broader comparative analysis. Furthermore, a key benefit of priming is its lack of requirement for extra labeled data, which allows our method to leverage unlabeled downstream task data. As shown in Fig.~\ref{fig:ablation_extra_distil}, increasing the amount of unlabeled data for the priming stage (while keeping VR training fixed at 16-shot) steadily improves our method's performance. This ability to benefit from extra unlabeled data is a unique advantage of our approach compared to BMR baselines (BAR and BlackVIP), which cannot utilize such data in this manner.

\section{Conclusion}
\label{sec:conclusion}
In this work, we addressed key challenges in adapting closed-box Model-as-a-Service APIs, including the heavy query costs, slow optimization, and diminishing effectiveness of ZOO-based methods on modern service models. We introduced AReS, an alternative reprogramming strategy that uses a single-pass interaction with the service API to prepare a local encoder, enabling efficient glass-box reprogramming without further API dependence. This simple, theoretically-supported design yields substantial gains in both performance and practicality.
Across diverse benchmarks, AReS consistently outperforms state-of-the-art ZOO approaches on VMs and VLMs, and delivers large improvements on proprietary APIs such as GPT-4o while ZOO-based approaches falter.
By dramatically reducing API calls by over 99.99\%, AReS enables cost-free inference and offline deployment. These comprehensive findings underscore AReS as a robust, economical, and highly effective solution for closed-box model adaptation. We show that reprogrammability can be efficiently unlocked in a local model through a minimal, one-time priming interaction, facilitating a more practical path for transfer learning in API-centric ecosystems.

\clearpage

\section*{Acknowledgment}
We thank the anonymous reviewers for their insightful comments and suggestions. 
This work was supported in part by the NSF EPSCoR-Louisiana Materials Design Alliance (LAMDA) program \#OIA-1946231.

{
    \small
    \bibliographystyle{ieeenat_fullname}
    \bibliography{main}
}


\clearpage
\onecolumn
\begin{center}
    \huge \textbf{Prime Once, then Reprogram Locally:
An Efficient Alternative to Black-Box Service Model Adaptation  \texttt{Supplementary Materials}}
\end{center}

\noindent This appendix provides comprehensive details supporting the main paper. Section \ref{app:mr_detail} elaborates on model reprogramming techniques, while Section \ref{app:exp_setting} presents detailed experimental configurations, including dataset statistics and implementation specifics for all baselines and our AReS method (Section \ref{app:implementation_detail}). Section \ref{app:theoretical_analysis} details the complete theoretical analysis establishing performance bounds between service and local models. Additional experimental results are reported in Section \ref{app:exp_result}, covering few-shot VM experiments (Section \ref{app:vm_fewshot}), local model enhancement analysis (Section \ref{app:local_wb_vr}), evaluations on \textbf{real-world proprietary APIs} such as \footnote{\texttt{https://platform.\,openai.com/docs/models/gpt-4o}}{GPT-4o} and \footnote{\texttt{https://www.\,clarifai.com/}}{Clarifai} (Section \ref{app:mllm_gpt}), and a detailed analysis of challenging VLM scenarios (Section \ref{app:failure_analysis}). Section \ref{app:further_discussion} offers further discussion on our contributions and design rationale. Finally, Section \ref{app:limitations} discusses the limitations of our approach.

\section{Details of Model Reprogramming}
\label{app:mr_detail}
Model Reprogramming (MR) \citep{chen2024model, jia2022visual, bahng2022exploring, zhang2022fairness, xiao2026promptbased} is a technique that adapts pre-trained models, such as a source model $\mathcal{F}_S$, to new tasks without altering their underlying parameters. This is particularly useful when dealing with powerful models accessed as a service (MaaS) where direct modification is not possible. The core idea involves an input transformation function $g_{\mathrm{in}}$ that adapts inputs $x^T$ from the downstream task domain $\mathcal{X}^T$ to the source domain $\mathcal{X}^S$ through a learnable prompt $\mathbf{P}$. Additionally, an output or label mapping function $g_{\mathrm{out}}$ is employed to bridge the source label space $\mathcal{Y}^S$ and the target label space $\mathcal{Y}^T$. This approach aims to leverage the rich features learned by large pre-trained models for new, potentially resource-scarce tasks, by essentially ``tricking" the model into performing a different function through these carefully crafted input and output manipulations.

\subsection{Input Transformation}
Input transformation \citep{tsao2024autovp, zhang2022fairness, balckvip_CVPR23, bar_icml20} in Model Reprogramming, denoted as $g_{\mathrm{in}}(x^T|\mathbf{P})$, serves to bridge the domain gap between the downstream task and the source task the original model $\mathcal{F}_S$ was trained on. It introduces a learnable prompt or program $\mathbf{P}$ that modifies the downstream task inputs $x^T$ before they are fed to $\mathcal{F}_S$, aiming to make the downstream data compatible with its input expectations. Common approaches in Visual Reprogramming (VR) include padding-based VR, which adds trainable noise patterns to the outer frames of an image while preserving its integrity, and watermarking-based VR, where trainable noise patterns are overlaid directly onto the input images. In the context of BAR \citep{bar_icml20}, the input transformation takes the form of an ``adversarial program'' $P = \tanh(W \odot M)$, where $W$ represents learnable parameters and $M$ is a binary mask ensuring that the original embedded target data remains unchanged. This adversarial program $P$ is universal to all target data samples and is learned to make the model $\mathcal{F}_S$ produce outputs that can be mapped to the desired target task labels. The learning of these parameters, especially in closed-box settings, often relies on zeroth-order optimization techniques.

\subsection{Output Mapping}
\label{app:output_mapping}
Output mapping \citep{chen2023understanding, cai2024bayesian, bar_icml20, kloberdanz2021improved}, $g_{\mathrm{out}}$, is a crucial step that translates the predictions from the pre-trained model's original label space $\mathcal{Y}^S$ to the downstream task's label space $\mathcal{Y}^T$, which are often distinct. This process is frequently designed to be gradient-free, meaning it does not introduce additional trainable parameters requiring backpropagation, thus preserving efficiency, especially for tasks with large label spaces. Existing gradient-free methods like Random Label Mapping (RLM) \citep{elsayed2018adversarial}, Frequent Label Mapping (FLM) \citep{bar_icml20}, and Iterative Label Mapping (ILM) \citep{chen2023understanding} typically establish a one-to-one correspondence. However, such one-to-one mappings can be limiting as they may overlook more complex, potentially many-to-many relationships between pre-trained and downstream labels. To address this, probabilistic and multi-label mapping strategies have been developed. Bayesian-guided Label Mapping (BLM), for example, constructs a probabilistic matrix quantifying pairwise relationships between pre-trained and downstream labels using Bayesian conditional probability, allowing for a flexible many-to-many mapping. 
Similarly, BAR \citep{bar_icml20} can utilize multi-label mapping where multiple source labels map to a single target label, often determined by a frequency-based scheme from initial predictions. 

The nature of $g_{\mathrm{out}}$ can differ for VMs and VLMs. For VMs, if label spaces differ, an explicit $g_{\mathrm{out}}$ like BLM is necessary, and in the AReS framework, priming for VMs occurs in the service model's label probability space. For VLMs, $g_{\mathrm{out}}$ might be an identity mapping if label spaces align, or the VLM's text encoder can naturally transform outputs to the downstream task label space.

\begin{figure}[!t]
    \centering
    \includegraphics[width=1\linewidth]{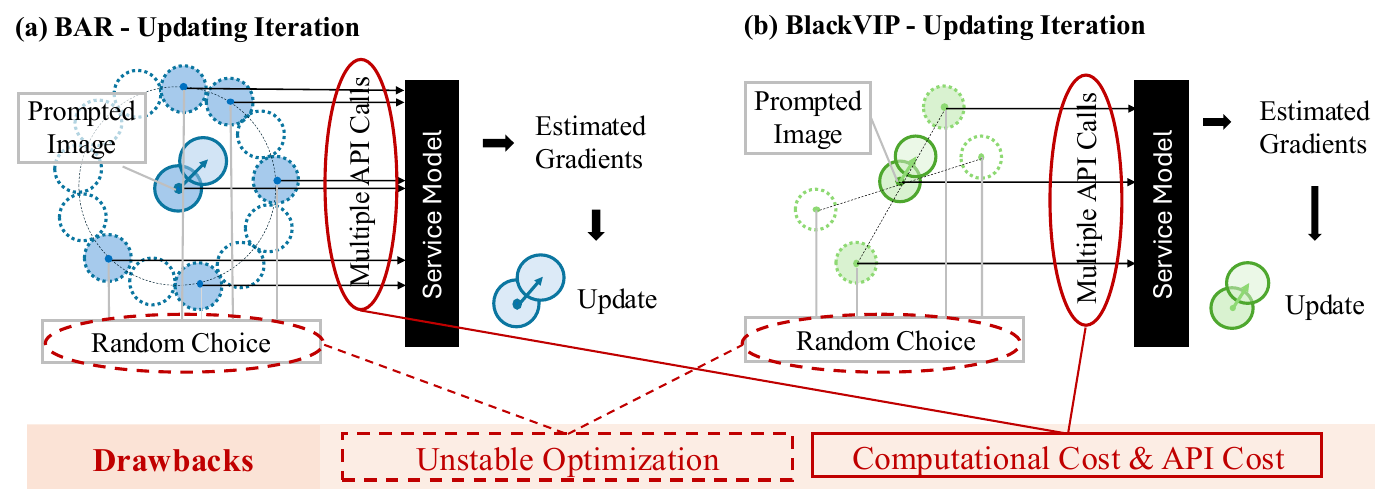}
    \caption{Illustration of Zeroth-Order Optimization (ZOO) techniques commonly used in closed-box model reprogramming. \textbf{(a)} BAR with Randomized Gradient-Free (RGF) method estimates gradients by querying the model with random directional perturbations. \textbf{(b)} BlackVIP with Simultaneous Perturbation Stochastic Approximation with Gradient Correction (SPSA-GC) approximates gradients using only two model queries with a randomly generated perturbation vector. Both approaches suffer from high query complexity and noisy gradient estimates, leading to unstable and computationally intensive optimization, especially for high-dimensional prompts.}
    \label{fig:drawback}
\end{figure}    

\subsection{Training}
The training process in model reprogramming focuses on learning the parameters of the input transformation $g_{\mathrm{in}}(\cdot|P)$, such as the adversarial program $P$, while the pre-trained source model $\mathcal{F}_S$ remains frozen. The objective is to minimize a loss function on the downstream task data using the transformed inputs and mapped outputs. In a glass-box setting, where gradients from $\mathcal{F}_S$ are accessible, standard gradient-based optimization can efficiently update the transformation parameters. Our AReS method leverages this by first priming a local, fully accessible model $\mathcal{F}_L$, then performing efficient glass-box prompt learning.
Conversely, when $\mathcal{F}_S$ is a closed-box, Zeroth-Order Optimization (ZOO) techniques are employed. These methods estimate gradients using only the model's input-output responses, for instance, through random gradient-free (RGF) optimization \citep{tu2019autozoom, liu2018zeroth}, Simultaneous Perturbation Stochastic Approximation with Gradient Correction (SPSA-GC) \citep{spall1992multivariate, spall1997one}, as shown in Fig.~\ref{fig:drawback} and Fig.~\ref{fig:opt_comparison}, closed-box training presents significant difficulties: optimization can be unstable due to noisy gradient estimates; 
it incurs high computational costs and numerous API calls; it requires continuous API dependency for both training and inference; and the performance gains are often uncertain despite the substantial investment. Our AReS approach aims to circumvent these challenges by shifting the reprogramming to a locally primed model.

\section{Experimental Setting}
\label{app:exp_setting}
\subsection{Datasets}
\label{app:dataset}

\textbf{Datasets.} We evaluate our method on ten diverse datasets commonly used in transfer learning literature. These datasets span various domains including fine-grained categorization (Flowers102 \citep{flowers102}, StanfordCars \citep{stanfordcars}), texture recognition (DTD \citep{DTD}), action recognition (UCF101 \citep{UCF101}), food classification (Food101 \citep{Food101}), traffic sign recognition (GTSRB \citep{GTSRB}), satellite imagery (EuroSAT \citep{EuroSAT}), animal recognition (OxfordPets \citep{Oxfordpets}), scene classification (SUN397 \citep{sun397}), and digit recognition (SVHN \citep{SVHN}). Following established protocols in prior work \citep{balckvip_CVPR23}, we adopt a few-shot learning setup with 16 randomly selected training examples per class, using the entire test set for evaluation when the service model is VLM. And full-shot learning is applied when the service model is VM, consistent with \citep{bar_icml20}. Table~\ref{tab:dataset} provides detailed statistics for each dataset.

\begin{table}[ht]
\caption{Detailed dataset information.}
\small
\begin{center}
\begin{tabularx}{\textwidth}{l|ZZZ|r}
\toprule
\multicolumn{1}{l|}{Dataset} & Full-shot Training & 16-shot Training & Testing Set Size & Number of Classes \\
\midrule
Flowers102     & 4,093  & 1,632  & 2,463  & 102 \\
DTD            & 2,820  & 752    & 1,692  & 47  \\
UCF101         & 7,639  & 1,616  & 3,783  & 101 \\
Food101        & 50,500 & 1,616  & 30,300 & 101 \\
GTSRB          & 39,209 & 688    & 12,630 & 43  \\
EuroSAT        & 13,500 & 160    & 8,100  & 10  \\
OxfordPets     & 2,944  & 592    & 3,669  & 37  \\
StanfordCars   & 6,509  & 3,136  & 8,041  & 196 \\
SUN397         & 15,888 & 6,352  & 19,850 & 397 \\
SVHN           & 73,257 & 160    & 26,032 & 10  \\
\bottomrule
\end{tabularx}
\end{center}
\label{tab:dataset}
\end{table}

\subsection{Experimental Scope and Rationale}
\label{app:scope}
Our focus on image classification is a deliberate choice made to ensure a direct and rigorous comparison with the established art in closed-box Model Reprogramming (BMR)~\citep{bar_icml20, balckvip_CVPR23}. The primary methods in this field, including our main baselines (BAR~\citep{bar_icml20}, BlackVIP~\citep{balckvip_CVPR23}, LLM-Opt~\citep{liu2024language}) and other related works~\citep{wang2024craft, ouali2023black, guo-etal-2023-black, kunananthaseelan2024lavip, sun_ijcai22} shown in Table~\ref{tab:bb_comparison}, are all benchmarked on classification tasks for both standard Vision Models (VMs) and Vision-Language Models (VLMs). Adhering to this established protocol allows for a fair and robust evaluation of our framework, which is designed to provide an efficient reprogramming solution for both model types. Furthermore, classification remains a challenging and relevant testbed; recent work has shown that powerful Multimodal Large Language Models (MLLMs) e.g., LLaVA~\citep{llava} often suffer from catastrophic forgetting, failing to retain the full classification performance of their underlying visual towers~\citep{zhai2023investigating}. This highlights the non-trivial nature of adapting these models for pure classification and validates the importance of our approach.

While extending AReS to generative tasks like Visual Question Answering (VQA) is a valuable direction for future research, such tasks are outside the scope of this initial investigation. A core goal of our work is to support standard vision classifiers like ViT and ResNet, for which open-ended generative tasks are not directly applicable. By demonstrating AReS's effectiveness on the fundamental task of classification---a common ground for both VMs and VLMs---we build a strong and reliable foundation for future extensions into more complex, multimodal reasoning tasks.

\subsection{Baselines and Implementation Details}
\label{app:implementation_detail}
Our experiments compare AReS against three primary closed-box visual reprogramming baselines: BAR, BlackVIP, and LLM-Opt.

\noindent\textbf{BAR~\citep{bar_icml20}}\footnote{\url{https://github.com/yunyuntsai/black-box-Adversarial-Reprogramming}} repurposes closed-box models by learning a universal adversarial program that is added to target inputs. It relies on Randomized Gradient-Free (RGF) optimization, based on input-output responses, and employs a multi-label mapping scheme. For our VMs experiments, we utilize BLM\footnote{\label{fn:blm_code}\url{https://github.com/tmlr-group/BayesianLM}} for BAR to ensure a fair comparison with other methods, including our own, which also uses BLM. We implemented BAR by referencing its official codebase and BlackVIP's reported BAR implementation, adhering to the hyperparameters detailed in BlackVIP's appendix\footnote{\label{fn:blackvip_config}\url{https://github.com/changdaeoh/BlackVIP/blob/main/docs/configuration.md}} for stable convergence. BAR uses focal loss as its learning objective.

\noindent\textbf{BlackVIP \citep{balckvip_CVPR23}}\footnote{\label{fn:blackvip_code}\url{https://github.com/changdaeoh/BlackVIP}} enhances closed-box visual prompting by introducing a Coordinator network to generate input-dependent visual prompts and employs Simultaneous Perturbation Stochastic Approximation with Gradient Correction (SPSA-GC) for optimization. We use the official BlackVIP codebase and strictly adhere to the hyperparameter settings reported in their paper and detailed in their configuration files, which cover learning rates and SPSA-GC specific parameters. BlackVIP utilizes cross-entropy loss. For adapting VLMs, all methods, including BlackVIP and BAR, are evaluated in a 16-shot learning setting (16 training samples per class), consistent with the setup in the BlackVIP paper. 

\noindent\textbf{LLM-Opt~\citep{liu2024language}}\footnote{\url{https://llm-can-optimize-vlm.github.io}} utilizes a large language model (e.g., GPT-4) as a closed-box optimizer to find effective text prompts for a target VLM. The method employs an automated "hill-climbing" procedure, where it provides the LLM optimizer with both high- and low-performing prompts as textual feedback to guide the search for better candidates \citep{liu2024language}. However, this approach has two significant limitations. First, it introduces a second, costly API dependency for the LLM optimizer, which can amount to over \$100 in API fees for a single experimental run \citep{liu2024language}. Second, because the method exclusively optimizes the text prompt, its applicability is restricted to VLMs and it cannot be used to adapt standard VMs, which are a key focus of our work.

\noindent \textbf{AReS}, our proposed method, the visual reprogramming (VR) components are implemented leveraging the publicly available codebase of BLM. The priming stage, transferring from the service API to the local model's classification head, is performed for 100 epochs. The subsequent glass-box VR on the primed local model is conducted for 200 epochs. Both stages use the Adam optimizer with learning rates of 0.001 for KD and 0.01 for local VR, respectively. The local VR part of AReS employs cross-entropy loss. AReS follows the 16-shot protocol for VLM experiments and operates in a full-shot setting for VM experiments. Following \citep{balckvip_CVPR23}, pre-trained Vision Transformer (ViT) models and encoders used in our experiments are sourced from \texttt{timm} \citep{timm}, ResNet models are from \texttt{PyTorch} \citep{pytorch}, and CLIP models are from the official \texttt{CLIP} repository \citep{CLIP}.

\subsection{Clarification of VLM and VM Evaluation Setups}

Our experiments distinguish between two primary evaluation setups: one for Vision-Language Models (VLMs) and another for standard Vision Models (VMs). The core differences lie in how each model type handles the output space of the downstream task and the data settings used for evaluation, which were chosen to ensure fair and direct comparisons with established baselines.

\noindent\textbf{VLM Evaluation Setup.}
In our experiments, the VLM setup primarily involves models like CLIP, which is consistent with the standard set by our main baseline, BlackVIP. The key characteristics are:

\begin{itemize}
    \item \textbf{Output Handling:} VLMs utilize a text encoder to perform zero-shot classification without a fixed output head. This means their output space is naturally aligned with the downstream task's text labels. Consequently, a separate or complex label mapping mechanism is not required, simplifying the adaptation process.
    \item \textbf{Data Setting:} To ensure a direct comparison, the evaluation follows BlackVIP's established protocol, which uses a \textbf{16-shot} learning setting. The closed-box model reprogramming for all compared methods is learned using only these few-shot labeled samples.
\end{itemize}

\noindent\textbf{VM Evaluation Setup.}
The VM setup uses standard, pre-trained vision models with a fixed output vocabulary, such as an ImageNet-pretrained ViT-B/16 or ResNet101. This setup presents a more significant challenge:

\begin{itemize}
    \item \textbf{The Challenge of Label Space Mismatch:} Unlike VLMs, these models face a fundamental label space mismatch. Their fixed output vocabulary (e.g., 1000 ImageNet classes) does not align with the labels of the downstream task (e.g., Flowers102). This is a "non-trivial problem" that prior work like BlackVIP explicitly avoided.
    \item \textbf{AReS's Solution:} Our framework directly solves this challenge by incorporating Bayesian-guided Label Mapping (BLM) to bridge the source and target label spaces. In this setup, AReS's priming stage occurs within the service model's source label space, as it has no inherent knowledge of the target class names.
    \item \textbf{Data Setting:} Following the standard for BMR on vision models established by BAR, these experiments are conducted in a \textbf{full-shot} setting. For completeness, a few-shot VM setting was also explored in our appendix.
\end{itemize}

\begin{figure}
    \centering
    \includegraphics[width=1\linewidth]{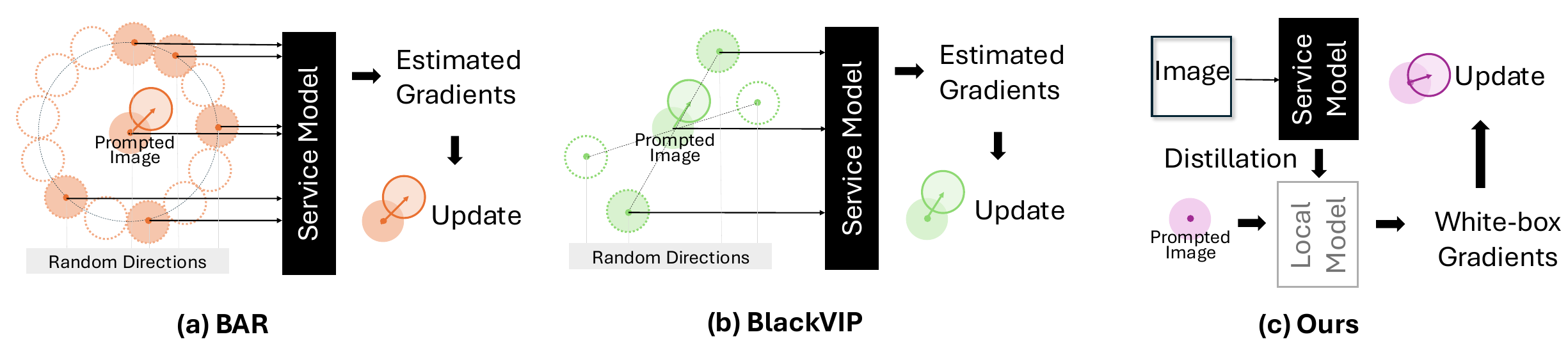}
    \caption{Comparison of closed-box visual reprogramming approaches. \textbf{(a)} BAR and \textbf{(b)} BlackVIP rely on Zeroth-Order Optimization (ZOO) by repeatedly querying the service model with perturbed inputs (e.g., using random directions) to estimate gradients for updating the visual prompt. These methods suffer from high API call costs and potentially unstable optimization. \textbf{(c)} Our AReS method performs a one-time priming from the service model to a local model. Subsequent visual prompt optimization occurs efficiently on this local model using glass-box gradients, eliminating further API calls and enabling stable, cost-effective adaptation.}
    \label{fig:opt_comparison}
\end{figure}

\section{Details of Theoretical Analysis}
\label{app:theoretical_analysis}
In this section, we provide a theoretical analysis to understand the effectiveness of our proposed method, AReS. Our framework involves two primary components affecting the final performance on the downstream task: priming from the service model $\mathcal{F}_S$ to the local encoder $\mathcal{F}_L$, and visual reprogramming (VR) applied to the local model using prompt $\mathbf{P}$. Our analysis is related to the broader study of representation transferability via task-relatedness~\citep{NEURIPS2024_d3602fc9}, which provides formal tools for understanding when and why transferring representations between models is effective. We begin our analysis by considering the scenario where the model's output logits are directly aligned with the target labels, effectively assuming the label mapping $g_{\mathrm{out}}$ is an identity function due to the text encoder.

\noindent\textbf{Notations:}
Let $\mathcal{D}^T$ represent the downstream task data distribution $p(x,y)$ over inputs $x \in \mathcal{X}^T$ and labels $y \in \mathcal{Y}^T$.
Let $\mathcal{F}_S: \mathcal{X}^S \to \mathbb{R}^{K^S}$ be the closed-box service model and $\mathcal{F}_L: \mathcal{X}^S \to \mathcal{Z}$ be the local surrogate model. After priming, $\mathcal{F}_L$ includes the learned linear layer mapping inputs to the downstream task's logit space $\mathcal{Z} = \mathbb{R}^{K^T}$. The visual reprogramming involves an input transformation $g_{\mathrm{in}}(x, p)$ parameterized by $p$. $\ell: \mathcal{Z} \times \mathcal{Y}^T \to \mathbb{R}_{\ge 0}$ is cross-entropy loss.

Let $p_S(x)$ and $p_L(x)$ denote the probability distributions generated by the service model $\mathcal{F}_S$ and the local model $\mathcal{F}_L$ for an input $x$, respectively. 
For the purpose of this theoretical analysis, we introduce logits. Let $z_S(x)$ and $z_L(x)$ be the logits produced by the service model $\mathcal{F}_S$ and the local model $\mathcal{F}_L$ for input $x$ \emph{before} visual reprogramming. It is important to note that this assumption of direct logit access from $\mathcal{F}_S$ is made for theoretical tractability and differs from the practical setting in the main paper, where only probabilities are assumed to be accessible from the service model.
Let $z^*_S(x, \mathbf{Q}^*) = \mathcal{F}_S(g_{\mathrm{in}}(x, \mathbf{Q}^*))$ and $z^*_L(x, \mathbf{P}^*) = \mathcal{F}_L(g_{\mathrm{in}}(x, \mathbf{P}^*))$ be the logits produced by the service model and local model, respectively, for input $x$ \emph{after} visual reprogramming with their respective optimal prompts $\mathbf{Q}^*$ and $\mathbf{P}^*$.

The downstream risk for the local model $\mathcal{F}_L$ \emph{without} VR is:
        $$ \mathcal{R}_L(\mathcal{D}^T) := \mathbb{E}_{(x,y) \sim \mathcal{D}^T}[\ell(z_L(x), y)] 
        $$
The downstream risk for the local model $\mathcal{F}_L$ \emph{after} applying VR with its optimal prompt $\mathbf{P}^*$ is:
        $$ \mathcal{R}_L(\mathcal{D}^T, \mathbf{P}^*) := \mathbb{E}_{(x,y) \sim \mathcal{D}^T}[\ell(z^*_L(x, \mathbf{P}^*), y)] 
        $$
Similarly, for the service model $\mathcal{F}_S$, the downstream risk \emph{without} VR is:
        $$ \mathcal{R}_S(\mathcal{D}^T) := \mathbb{E}_{(x,y) \sim \mathcal{D}^T}[\ell(z_S(x), y)] \,.$$
And the downstream risk for the service model $\mathcal{F}_S$ \emph{after} applying VR with its optimal prompt $\mathbf{Q}^*$ is:
        $$ \mathcal{R}_S(\mathcal{D}^T, \mathbf{Q}^*) := \mathbb{E}_{(x,y) \sim \mathcal{D}^T}[\ell(z^*_S(x, \mathbf{Q}^*), y)] \,.$$

\noindent\textbf{Definitions and Assumptions.}
We introduce the following definitions and assumptions for our theoretical analysis.
\begin{definition} \label{app:def_faithful_distill}
    ($\epsilon$-Faithful Priming). The priming from $\mathcal{F}_S$ to $\mathcal{F}_L$ is considered $\epsilon$-faithful if the expected $L_1$ norm of the difference between their output logits is bounded by $\epsilon \ge 0$, both before and after applying their respective optimal visual reprogramming prompts:
$$ \mathbb{E}_{(x,y) \sim \mathcal{D}^T}[\|z_S(x) - z_L(x)\|_1] \leq \epsilon$$
and
$$ \mathbb{E}_{(x,y) \sim \mathcal{D}^T}[\|z^*_S(x, \mathbf{Q}^*) - z^*_L(x, \mathbf{P}^*)\|_1] \leq \epsilon \,.$$
This implies that the priming is sufficiently effective such that the logit distributions of the service and local models are closely aligned. A smaller $\epsilon$ indicates a more faithful priming.
\end{definition}

\begin{assumption} \label{app:assum_service_model_superiority}
    (Service Model Superiority). The service model $\mathcal{F}_S$ is inherently more powerful or better aligned with the downstream task distribution $\mathcal{D}^T$ compared to the local model $\mathcal{F}_L$, both before and after optimal visual reprogramming. Formally:
    $$ \mathcal{R}_S(\mathcal{D}^T) \leq \mathcal{R}_L(\mathcal{D}^T) $$
    and
    $$ \mathcal{R}_S(\mathcal{D}^T, \mathbf{Q}^*) \leq \mathcal{R}_L(\mathcal{D}^T, \mathbf{P}^*) \,.$$
    This is a natural assumption that motivates the use of the service model.
\end{assumption}

\begin{assumption} \label{app:assum_faithful_distill}
    ($\epsilon$-Faithful Priming). The priming process is effective, resulting in the local model $\mathcal{F}_L$ closely mimicking the logit distributions of the service model $\mathcal{F}_S$, both immediately after priming (before VR) and after both models have been optimally reprogrammed for the downstream task.
\end{assumption}

\subsection{Service and Local Model Performance Difference Bound}
This section establishes the theoretical foundation for understanding the performance relationship between service and local models in our AReS framework. We begin by proving that the cross-entropy loss function exhibits Lipschitz continuity with respect to logit differences (Lemma \ref{app:lemma_lipschitz_cross_entropy_general}), which provides the mathematical basis for our subsequent analysis. Building on this property, we derive performance difference bounds (Lemma \ref{app:lemma_perf_diff_bound}) that quantify how faithful priming affects the performance gap between models. Finally, we establish tight bounds on service model performance relative to the primed local model (Theorem \ref{app:thm_bound_service_model_perf}), demonstrating that faithful priming enables the local model to closely approximate the service model's capabilities both before and after optimal visual reprogramming.

\begin{lemma} \label{app:lemma_lipschitz_cross_entropy_general}
(Lipschitz Continuity of Cross-Entropy Loss with respect to Logits).
The cross-entropy loss function $\ell(z, y) = -\sum_{j=1}^{K^T} y_j \log(p_j(z))$, where $p_j(z)$ are softmax probabilities derived from logits $z \in \mathbb{R}^{K^T}$ and $y$ is a one-hot true label vector, is Lipschitz continuous with respect to the logits $z$. The specific Lipschitz constant depends on the norm used to measure the difference between logit vectors.
Specifically: $\ell(z,y)$ is 1-Lipschitz with respect to the $L_1$ norm of the logits:
$$|\ell(z_1, y) - \ell(z_2, y)| \leq \|z_1 - z_2\|_1\,,$$
for any two logit vectors $z_1, z_2$. These constants do not explicitly depend on the number of classes $K^T$ (for $K^T \ge 2$).
\end{lemma}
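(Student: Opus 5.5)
We reduce the claim to a bound on the gradient of $\ell$ with respect to $z$ and then apply the mean value theorem along the segment joining $z_1$ and $z_2$.

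Fix the one-hot label $y$ with true class $c$. Then
\[
\ell(z,y) = -z_c + \log\sum_{j=1}^{K^T} e^{z_j},
\]
which is a smooth (convex) function of $z$. Its partial derivatives are
\[
\frac{\partial \ell}{\partial z_j} = p_j(z) - y_j .
\]
So the gradient is $\nabla_z \ell = p(z) - y$.

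The first step is to bound the dual norm of this gradient. Since the $L_1$ norm is dual to $L_\infty$, it suffices to control $\|p(z)-y\|_\infty$. For $j \neq c$ we have $|p_j - 0| = p_j \in (0,1)$. For $j = c$ we have $|p_c - 1| = 1 - p_c \in (0,1)$. Hence $\|\nabla_z \ell\|_\infty \le 1$ for every $z$, with no dependence on $K^T$. This requires $K^T \ge 2$ only so that the softmax is nondegenerate; for $K^T=1$ the loss is identically $0$.

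The second step applies the mean value theorem to $\phi(t) = \ell(z_2 + t(z_1-z_2), y)$ on $[0,1]$. There is some $t^*$ with, writing $\xi = z_2 + t^*(z_1 - z_2)$,
\[
\ell(z_1,y)-\ell(z_2,y) = \phi'(t^*) = \langle \nabla_z \ell(\xi), z_1 - z_2\rangle .
\]
Hölder's inequality then bounds the absolute value by $\|\nabla_z\ell\|_\infty \|z_1-z_2\|_1 \le \|z_1-z_2\|_1$. This gives exactly the stated 1-Lipschitz bound.

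The only potential obstacle is getting the dual-norm pairing right: the $L_1$ Lipschitz constant is governed by the $L_\infty$ norm of the gradient, not its $L_2$ norm. One could sharpen the bound, since $\|p-y\|_1 = 2(1-p_c) \le 2$ would give a constant $2$ for an $L_\infty$ version, but this is not needed here. A one-sided check is also available: $\ell(z_1,y)-\ell(z_2,y)$ equals $-(z_{1,c}-z_{2,c})$ plus the difference of log-sum-exps, and log-sum-exp is 1-Lipschitz in $L_\infty$. That route gives the bound $2\|z_1-z_2\|_\infty$, which is at most $2\|z_1-z_2\|_1$, and is too loose. So we rely on the gradient argument above, which yields constant $1$.
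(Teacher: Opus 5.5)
Your proposal is correct and follows essentially the same route as the paper. Both compute $\nabla_z \ell = p(z) - y$, bound its $L_\infty$ norm by $1$, and conclude by $L_1$/$L_\infty$ duality. Your explicit mean value theorem step along the segment from $z_2$ to $z_1$, combined with H\"older's inequality, fills in the step the paper only cites as a general fact.
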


\begin{proof}
The gradient of the cross-entropy loss $\ell(z, y)$ with respect to the logits $z$ is given by:
$$ \nabla_z \ell(z, y) = p(z) - y \,,$$
where $p(z)$ is the vector of softmax probabilities derived from $z$, and $y$ is the one-hot true label vector.
A differentiable function $f(x)$ is $L$-Lipschitz with respect to a norm $\|\cdot\|_p$ if the dual norm $\|\cdot\|_q$ of its gradient is bounded by $L$ (i.e., $\|\nabla f(x)\|_q \leq L$), by Hölder's inequality.
For the $L_1$ norm of logits ($p=1$, dual norm $q=\infty$):
We examine the $L_\infty$ norm of the gradient:
$$ \|\nabla_z \ell(z, y)\|_{\infty} = \|p(z) - y\|_{\infty} = \max_{j=1, \dots, K^T} |p_j(z) - y_j| \,.$$
Let $k$ be the index of the true class ($y_k=1$, and $y_j=0$ for $j \neq k$).
Then $|p_k(z) - 1| = 1 - p_k(z) \leq 1$. For $j \neq k$, $|p_j(z) - 0| = p_j(z) \leq 1$.
Thus, $\|\nabla_z \ell(z, y)\|_{\infty} \leq 1$. This establishes $L=1$.

\end{proof}

\begin{lemma} \label{app:lemma_perf_diff_bound} 
(Performance Difference Bound due to Priming Faithfulness).
Under Assumption \ref{app:assum_faithful_distill} ($\epsilon$-Faithful Priming), the difference in performance between the local model $\mathcal{F}_L$ and the service model $\mathcal{F}_S$ is bounded by $\epsilon$ as follows:
\begin{enumerate}[leftmargin=3em]
    \item Before visual reprogramming:
    $$ \mathcal{R}_L(\mathcal{D}^T) - \mathcal{R}_S(\mathcal{D}^T) \leq \epsilon \,;$$
    \item After optimal visual reprogramming with prompts $\mathbf{P}^*$ for $\mathcal{F}_L$ and $\mathbf{Q}^*$ for $\mathcal{F}_S$:
    $$ \mathcal{R}_L(\mathcal{D}^T, \mathbf{P}^*) - \mathcal{R}_S(\mathcal{D}^T, \mathbf{Q}^*) \leq \epsilon \,.$$
\end{enumerate}
\end{lemma}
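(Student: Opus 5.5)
The plan is to combine Lemma~\ref{app:lemma_lipschitz_cross_entropy_general} with the two inequalities in Definition~\ref{app:def_faithful_distill}, which Assumption~\ref{app:assum_faithful_distill} lets us invoke. Both parts of the statement have the same structure: a difference of two expectations of cross-entropy losses taken under the same distribution $\mathcal{D}^T$. So I would treat them uniformly. For any pair of logit maps $z_1(x), z_2(x)$, I would write
\begin{equation*}
\mathbb{E}_{(x,y)\sim\mathcal{D}^T}[\ell(z_1(x),y)] - \mathbb{E}_{(x,y)\sim\mathcal{D}^T}[\ell(z_2(x),y)] = \mathbb{E}_{(x,y)\sim\mathcal{D}^T}[\ell(z_1(x),y)-\ell(z_2(x),y)],
\end{equation*}
using linearity of expectation. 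I would then bound the integrand pointwise by its absolute value.

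The key step is the pointwise estimate $|\ell(z_1(x),y)-\ell(z_2(x),y)|\le \|z_1(x)-z_2(x)\|_1$. This is exactly the 1-Lipschitz property with respect to the $L_1$ norm in Lemma~\ref{app:lemma_lipschitz_cross_entropy_general}, applied at each fixed $(x,y)$ with $y$ one-hot. Taking expectations and using monotonicity of the integral gives that the risk difference is at most $\mathbb{E}[\|z_1(x)-z_2(x)\|_1]$. Since the norm is symmetric, the ordering of $z_1,z_2$ does not matter.

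For part 1, I would take $z_1=z_L$ and $z_2=z_S$ (logits before reprogramming). The first inequality of Definition~\ref{app:def_faithful_distill} then bounds the right-hand side by $\epsilon$. For part 2, I would take $z_1=z^*_L(\cdot,\mathbf{P}^*)$ and $z_2=z^*_S(\cdot,\mathbf{Q}^*)$. The second inequality of the definition gives the bound $\epsilon$. In fact the argument yields the stronger two-sided statement $|\mathcal{R}_L-\mathcal{R}_S|\le\epsilon$ in both cases, and the one-sided claims follow.

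The only step needing care is justifying the passage from the gradient bound to Lipschitz continuity. Lemma~\ref{app:lemma_lipschitz_cross_entropy_general} is assumed, but I would note that it follows from the mean value theorem along the segment between $z_1$ and $z_2$ combined with Hölder's inequality, using that $\ell(\cdot,y)$ is smooth. A second point is integrability: the expectations should be finite so that the subtraction is legitimate. I would handle this by noting that finiteness of $\mathbb{E}[\|z_S-z_L\|_1]$, together with finiteness of one of the risks, implies finiteness of the other via the same pointwise bound. I do not expect a genuine obstacle beyond this bookkeeping.
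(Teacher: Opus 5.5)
Your proposal is correct and matches the paper's proof: linearity of expectation, the pointwise $L_1$ 1-Lipschitz bound from Lemma~\ref{app:lemma_lipschitz_cross_entropy_general}, and then the corresponding inequality of Definition~\ref{app:def_faithful_distill}, with part 2 handled identically using the reprogrammed logits $z^*_L(\cdot,\mathbf{P}^*)$ and $z^*_S(\cdot,\mathbf{Q}^*)$. Your remarks on the two-sided bound and on integrability are valid refinements that the paper does not spell out.
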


\begin{proof}
We will prove the first part of the lemma regarding performance before visual reprogramming. The proof for the second part (after optimal visual reprogramming) follows an identical structure, applying the arguments to $z^*_L(x, \mathbf{P}^*)$ and $z^*_S(x, \mathbf{Q}^*)$ and using the corresponding condition from Assumption \ref{app:assum_faithful_distill}.

The difference in downstream risks between the local model $\mathcal{F}_L$ and the service model $\mathcal{F}_S$ is:
\begin{align*} 
\mathcal{R}_L(\mathcal{D}^T) - \mathcal{R}_S(\mathcal{D}^T) &= \mathbb{E}_{(x,y) \sim \mathcal{D}^T}[\ell(z_L(x), y)] - \mathbb{E}_{(x,y) \sim \mathcal{D}^T}[\ell(z_S(x), y)] \\ &= \mathbb{E}_{(x,y) \sim \mathcal{D}^T}[\ell(z_L(x), y) - \ell(z_S(x), y)]\,.
\end{align*}
By Lemma \ref{app:lemma_lipschitz_cross_entropy_general}, the cross-entropy loss $\ell(z,y)$ is 1-Lipschitz continuous with respect to the $L_1$ norm of its logit inputs. Therefore, for any specific sample $(x,y)$:
$$ \ell(z_L(x), y) - \ell(z_S(x), y) \leq |\ell(z_L(x), y) - \ell(z_S(x), y)| \leq \|z_L(x) - z_S(x)\|_1 \,.$$
Taking the expectation over the data distribution $\mathcal{D}^T$:
$$ \mathbb{E}_{(x,y) \sim \mathcal{D}^T}[\ell(z_L(x), y) - \ell(z_S(x), y)] \leq \mathbb{E}_{(x,y) \sim \mathcal{D}^T}[\|z_L(x) - z_S(x)\|_1] \,.$$
From Assumption~\ref{app:assum_faithful_distill} ($\epsilon$-Faithful Priming), we have the condition that $\mathbb{E}_{(x,y) \sim \mathcal{D}^T}[\|z_S(x) - z_L(x)\|_1] \leq \epsilon$.
Substituting this into the inequality:
$$ \mathcal{R}_L(\mathcal{D}^T) - \mathcal{R}_S(\mathcal{D}^T) \leq \epsilon \,.$$
\end{proof}

\noindent \begin{theorem} \label{app:thm_bound_service_model_perf}
(Bounding Service Model Performance).
Combining Lemma \ref{app:lemma_perf_diff_bound} with Assumption \ref{app:assum_service_model_superiority} (Service Model Superiority), we can establish bounds on the performance of the service model $\mathcal{F}_S$ relative to the primed local model $\mathcal{F}_L$:
\begin{enumerate}[leftmargin=3em]
    \item Before visual reprogramming:
    \begin{equation}
        \mathcal{R}_L(\mathcal{D}^T) - \epsilon \leq \mathcal{R}_S(\mathcal{D}^T) \leq \mathcal{R}_L(\mathcal{D}^T)\,;
    \end{equation} 
    \item After optimal visual reprogramming:
    \begin{equation}
        \mathcal{R}_L(\mathcal{D}^T, \mathbf{P}^*) - \epsilon \leq \mathcal{R}_S(\mathcal{D}^T, \mathbf{Q}^*) \leq \mathcal{R}_L(\mathcal{D}^T, \mathbf{P}^*)\,.
    \end{equation}
    
\end{enumerate}
\end{theorem}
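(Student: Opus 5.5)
The plan is to obtain each of the two sandwich inequalities by combining one inequality from Lemma~\ref{app:lemma_perf_diff_bound} with one inequality from Assumption~\ref{app:assum_service_model_superiority}. No new analytic work is needed beyond rearranging these already-established bounds. I will treat the two parts in parallel, since the part after reprogramming has exactly the same structure as the part before, with $z_L, z_S$ replaced by $z^*_L(\cdot,\mathbf{P}^*)$ and $z^*_S(\cdot,\mathbf{Q}^*)$.

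\emph{Upper bound.} For the right-hand inequality in part 1, I invoke the first clause of Assumption~\ref{app:assum_service_model_superiority}, namely $\mathcal{R}_S(\mathcal{D}^T) \le \mathcal{R}_L(\mathcal{D}^T)$. This is exactly the claimed upper bound. Likewise, the second clause gives $\mathcal{R}_S(\mathcal{D}^T,\mathbf{Q}^*) \le \mathcal{R}_L(\mathcal{D}^T,\mathbf{P}^*)$ for part 2.

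\emph{Lower bound.} For the left-hand inequality, I use part 1 of Lemma~\ref{app:lemma_perf_diff_bound}, which under $\epsilon$-faithful priming (Definition~\ref{app:def_faithful_distill}) gives $\mathcal{R}_L(\mathcal{D}^T) - \mathcal{R}_S(\mathcal{D}^T) \le \epsilon$. Moving $\mathcal{R}_S$ to one side yields $\mathcal{R}_L(\mathcal{D}^T) - \epsilon \le \mathcal{R}_S(\mathcal{D}^T)$. Part 2 of the same lemma gives the analogous inequality after optimal reprogramming. Chaining lower and upper bounds produces both displayed sandwiches.

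The only point needing care, and the closest thing to an obstacle, is making sure the hypotheses are used consistently. The after-VR faithfulness bound must compare $z^*_S$ at $\mathbf{Q}^*$ with $z^*_L$ at $\mathbf{P}^*$, the same prompts that appear in the risks, and the second clause of Definition~\ref{app:def_faithful_distill} is stated in exactly that form. The lemma's use of the 1-Lipschitz property (Lemma~\ref{app:lemma_lipschitz_cross_entropy_general}) also needs finite expectations, so that subtracting risks is legitimate. I would note this integrability (finite risks and finite expected $L_1$ logit gaps) explicitly as implicit in the setup, and then conclude.
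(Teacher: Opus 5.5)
Your proposal is correct and matches the paper's proof. The upper bound is read off the Service Model Superiority assumption, and the lower bound is the rearranged performance-difference lemma, with the reprogrammed case handled identically. Your remark about finite risks and finite expected logit gaps is a reasonable precision that the paper leaves implicit.
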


\begin{proof}
By Lemma~\ref{app:lemma_perf_diff_bound}, we have $\mathcal{R}_L(\mathcal{D}^T) - \mathcal{R}_S(\mathcal{D}^T) \leq \epsilon$, which rearranges to $\mathcal{R}_L(\mathcal{D}^T) - \epsilon \leq \mathcal{R}_S(\mathcal{D}^T)$.
From Assumption \ref{app:assum_service_model_superiority}, we have $\mathcal{R}_S(\mathcal{D}^T) \leq \mathcal{R}_L(\mathcal{D}^T)$.
Combining these two inequalities gives:
    $$ \mathcal{R}_L(\mathcal{D}^T) - \epsilon \leq \mathcal{R}_S(\mathcal{D}^T) \leq \mathcal{R}_L(\mathcal{D}^T) \,.$$

The reprogrammed case can be shown similarly.
\end{proof}

The above theoretical analysis provides solid justification for AReS's effectiveness by showing that faithful knowledge priming ($\epsilon$-faithful) directly translates to bounded performance differences between service and local models. Theorem \ref{app:thm_bound_service_model_perf}'s key insight reveals that traditional closed-box methods expend significant computational resources attempting to optimize the service model's performance directly, while AReS transforms this challenge by first achieving faithful priming (small $\epsilon$) and then efficiently optimizing the local model using stable first-order methods. This theoretical framework validates our empirical findings and explains why AReS can achieve competitive performance with dramatically reduced API calls and computational costs.

\section{Additional Experimental Results}
\label{app:exp_result}
\subsection{VM as a Service Model in Few-Shot Setting} \label{app:vm_fewshot}
Table~\ref{tab:vitb16_vitb32_res_fewshot} details the performance of VM adaptation using ViT-B/16 as the service model and ViT-B/32 as the local model within a challenging 16-shot per class setting. This contrasts with the full-shot experiments for VMs presented in the main paper (Tables~\ref{tab:rn101_res} and \ref{tab:vitb16_res}), which are generally preferred for VMs due to their more limited generalization compared to VLMs like CLIP. These few-shot results, however, provide a valuable benchmark for performance in data-scarce conditions. As anticipated, all evaluated methods show a decline in performance relative to the full-shot scenario. Notably, the closed-box reprogramming methods BAR and BlackVIP achieve average accuracies of only 14.6\% and 25.9\%, respectively. Despite this challenging low-data regime, our AReS method demonstrates superior performance, achieving an average accuracy of 32.8\%. This represents a significant average improvement of +18.2 percentage points over BAR and +6.9 percentage points over BlackVIP. This consistent outperformance, even with extremely limited training data, underscores AReS's effectiveness in robustly transferring knowledge from closed-box service models and significantly enhancing local model reprogramming capabilities across diverse datasets and varying data availability.

\begin{table*}[t]
\caption{
Accuracy and Efficiency comparison using RN101 (Service) in Full-shot setting.}
\vspace{-2.5mm}
\centering
\scriptsize
\resizebox{\textwidth}{!}{
\begin{tabular}{l|cccccccccc|c|r|r}
\toprule
Method & Flowers & DTD & UCF & Food & GTSRB & EuroSAT & Pets & Cars & SUN & SVHN &  Avg. & \#API (M) & Time (h) \\
\midrule
VR (glass-box)  
& \textcolor{gray!50}{42.3} & \textcolor{gray!50}{44.4} & \textcolor{gray!50}{34.4} & \textcolor{gray!50}{25.4} & \textcolor{gray!50}{53.1} & \textcolor{gray!50}{85.0} & \textcolor{gray!50}{73.9} & \textcolor{gray!50}{5.0} & \textcolor{gray!50}{20.5} & \textcolor{gray!50}{75.1} & \textcolor{gray!50}{45.9} & \textcolor{gray!50}{43.4} & 
\textcolor{gray!50}{21.9} \\
\midrule
BAR  
& 16.9 & 25.3 & 30.2 & 13.9 & 22.5 & 45.7 & 28.7 & 1.5 & 10.7 & 42.1 & 23.8 & 1,724.2 & 313.5 \\
BlackVIP (RN50)
& 21.7 & 39.6 & 31.4 & 14.9 & 26.8 & 68.3 & 61.4 & 3.5 & 12.3 & 42.6 & 32.3 & 2,586.2 & 320.5 \\
\cellcolor{gray!30}\textbf{AReS (RN50)}   
& \cellcolor{gray!30}\textbf{41.3} & \cellcolor{gray!30}\textbf{42.3} & \cellcolor{gray!30}\textbf{37.2} & \cellcolor{gray!30}\textbf{25.3} & \cellcolor{gray!30}\textbf{48.4} & \cellcolor{gray!30}\textbf{84.3} & \cellcolor{gray!30}\textbf{73.7} & \cellcolor{gray!30}\textbf{5.1} & \cellcolor{gray!30}\textbf{19.7} & \cellcolor{gray!30}\textbf{72.9} & \cellcolor{gray!30}\textbf{45.0}
& \cellcolor{gray!30}\textbf{0.2} & \cellcolor{gray!30}\textbf{8.7} \\
BlackVIP (ViT-B/32)
& 26.7 & 37.7 & 30.2 & 13.3 & 33.8 & 67.3 & 57.9 & 3.0 & 14.5 & 34.5 & 31.9 & 2,586.2 & 418.0 \\
\cellcolor{gray!30}\textbf{AReS  (ViT-B/32)}     
& \cellcolor{gray!30}\textbf{51.3} & \cellcolor{gray!30}\textbf{45.8} & \cellcolor{gray!30}\textbf{41.5} & \cellcolor{gray!30}\textbf{30.2} & \cellcolor{gray!30}\textbf{57.9} & \cellcolor{gray!30}\textbf{94.9} & \cellcolor{gray!30}\textbf{63.3} & \cellcolor{gray!30}\textbf{5.1} & \cellcolor{gray!30}\textbf{25.6} & \cellcolor{gray!30}\textbf{83.6} & \cellcolor{gray!30}\textbf{49.9} & \cellcolor{gray!30}\textbf{0.2} & 
\cellcolor{gray!30}\textbf{22.3} \\
\bottomrule
\end{tabular}
}
\label{tab:rn101_res}
\vspace{-2mm}
\end{table*}

\begin{table*}[t]
\caption{Accuracy comparison using ViT-B/16 (Service) and ViT-B/32 (Local) in 16-shot setting.}
\centering
\scriptsize
\resizebox{\textwidth}{!}{
\begin{tabular}{l|cccccccccc|c}
\toprule
Method & Flowers & DTD & UCF & Food & GTSRB & EuroSAT & Pets & Cars & SUN & SVHN &  Avg. \\
\midrule
VR (glass-box)  
& \textcolor{gray!50}{55.0} & \textcolor{gray!50}{44.7} & \textcolor{gray!50}{42.0} & \textcolor{gray!50}{18.0} & \textcolor{gray!50}{17.4} & \textcolor{gray!50}{70.6} & \textcolor{gray!50}{70.7} & \textcolor{gray!50}{5.7} & \textcolor{gray!50}{29.6} & \textcolor{gray!50}{28.4} & \textcolor{gray!50}{38.2} \\
\midrule
BAR  
& 9.7 & 16.3 & 23.5 & 8.1 & 4.8 & 34.2 & 21.5 & 1.0 & 7.6 & 19.6 & 14.6 \\
BlackVIP 
& 16.1 & \textbf{36.1} & 33.8 & 12.5 & 9.1 & 54.9 & 54.5 & 3.9 & 22.1 & 16.2 & 25.9 \\
\rowcolor{gray!30}
\textbf{AReS (Ours)} 
& \textbf{46.0} & 35.5 & \textbf{34.3} & \textbf{12.7} & \textbf{19.6} & \textbf{71.1} & \textbf{54.3} & \textbf{4.2} & \textbf{23.0} & \textbf{27.3} & \textbf{32.8} \\
\bottomrule
\end{tabular}
}
\label{tab:vitb16_vitb32_res_fewshot}
\end{table*}

\begin{table*}[t]
\caption{Accuracy comparison between Local VR (ViT-B/16) and our method on VLMs using CLIP ViT-B/16 (Service) and ViT-B/16 (Local) in a 16-shot setting.}
\centering
\scriptsize
\resizebox{\textwidth}{!}{
\begin{tabular}{l|cccccccccc|c}
\toprule
Method & Flowers & DTD & UCF & Food & GTSRB & EuroSAT & Pets & Cars & SUN & SVHN &  Avg. \\
\midrule
Zero-shot 
& 71.3 & 43.9 & 66.9 & \textbf{85.9} & 21.0 & 47.9 & \textbf{89.1} & \textbf{65.2} & 62.6 & 17.9 & 57.2 \\
Local VR  
& 55.0 & 44.7 & 42.0 & 18.0 & 17.4 & 70.6 & 70.7 & 5.7 & 29.6 & 28.4 & 38.2  \\
\cellcolor{gray!30}\textbf{AReS (Ours)}    
& \cellcolor{gray!30}\textbf{86.6} & \cellcolor{gray!30}\textbf{48.2} & \cellcolor{gray!30}\textbf{67.1} & \cellcolor{gray!30}68.8 & \cellcolor{gray!30}\textbf{39.4} & \cellcolor{gray!30}\textbf{85.7} & \cellcolor{gray!30}88.9 & \cellcolor{gray!30}43.2 & \cellcolor{gray!30}\textbf{62.8} & \cellcolor{gray!30}\textbf{63.2} & \cellcolor{gray!30}\textbf{65.4} \\
\bottomrule
\end{tabular}
}
\label{tab:clip_local_vr}
\end{table*}

\subsection{AReS's Impact on Local Model Performance}
\label{app:local_wb_vr}

To quantify AReS's enhancement of local model capabilities, we compare it against directly performing glass-box VR on the local model (termed "Local VR"). For AReS, we assume a pre-trained local encoder, consistent with methods like BlackVIP. For this evaluation of the local encoder only, we additionally assume a pre-trained linear layer on the source domain. As shown in Table~\ref{tab:clip_local_vr} for VLMs (CLIP ViT-B/16 service, ViT-B/16 local, 16-shot), AReS achieves a 65.4\% average accuracy, a substantial $+27.2\%$ improvement over Local VR's 38.2\%. This highlights that AReS's priming significantly boosts the local ViT-B/16's reprogrammability beyond its standalone capacity.

\begin{table}
\centering
\caption{Accuracy comparison between Local VR and our method on VMs in Full-shot setting.}
\label{tab:vm_local_vr_fullshot}
\scriptsize
\begin{tabular}{ll|cc}
\toprule

\textbf{Service} & \textbf{Local}& \textbf{Local VR}& \textbf{Ours}\\
\midrule
\multirow{2}{*}{ViT-B/16} & ViT-B/32 & 45.3 & \cellcolor{gray!30}{\textbf{50.4}} \\
& RN50     & 43.9 & \cellcolor{gray!30}{\textbf{45.9}} \\
\midrule
\multirow{2}{*}{RN101}    & ViT-B/32 & 45.3 & \cellcolor{gray!30}{\textbf{49.9}} \\
& RN50     & 43.9 & \cellcolor{gray!30}{\textbf{45.0}} \\
\bottomrule
\end{tabular}
\end{table}

A similar advantage for AReS is observed with VMs in the full-shot setting, as detailed in Table~\ref{tab:vm_local_vr_fullshot}. For instance, with a ViT-B/16 service model, AReS improves upon Local VR by $+5.1\%$ (50.4\% vs. 45.3\%) when using a ViT-B/32 local model, and by $+2.0\%$ (45.9\% vs. 43.9\%) with an RN50 local model. Consistent gains are also seen with an RN101 service model. These results across both VLM and VM configurations underscore the critical role of AReS's initial priming phase. By effectively transferring knowledge from the more powerful service model, AReS significantly elevates the local model's baseline performance and its potential for reprogramming, demonstrating a more effective utilization of combined model strengths.

\begin{table}[t]
\centering
\caption{Comparison of trainable parameters and accuracy on the EuroSAT dataset. More parameters do not correlate with better performance for ZOO-based methods.}
\label{tab:params}
\scriptsize
\begin{tabular}{l|rc}
\toprule
\textbf{Method} & \textbf{\# Trainable Params} & \textbf{Accuracy (\%)} \\
\midrule
VP w/ SPSA-GC & 69K & 70.9 \\
BAR & 37K & 77.3 \\
BlackVIP & 9K & 73.3 \\
\rowcolor{gray!30}
\textbf{AReS (Ours)} & \textbf{21K} & \textbf{85.7} \\
\bottomrule
\end{tabular}
\end{table}

\begin{table}[t]
\centering
\caption{Comparison with an enhanced BlackVIP baseline on the EuroSAT dataset. The results confirm that AReS's performance gain is primarily from its superior two-stage framework.}
\label{tab:enhanced_blackvip}
\scriptsize
\begin{tabular}{l|c}
\toprule
\textbf{Method} & \textbf{Accuracy (\%)} \\
\midrule
Local VR & 70.6 \\
BlackVIP & 73.3 \\
BlackVIP w/ Local Tuning & 74.1 \\
\rowcolor{gray!30}
\textbf{AReS (Ours)} & \textbf{85.7} \\
\bottomrule
\end{tabular}
\end{table}

\subsection{Dissecting the Source of Performance Gain}
The superior performance of AReS stems not from a higher parameter count or the mere inclusion of local tuning, but from its novel two-stage framework. We demonstrate that the initial priming stage is an indispensable component and that the overall design is highly parameter-efficient.

First, an analysis of parameter efficiency reveals that for ZOO-based methods, a larger number of trainable parameters does not guarantee better performance and can even be detrimental. As shown in Table~\ref{tab:params}, a baseline using SPSA-GC with a large visual prompt (69K parameters) achieves a lower accuracy than more compact models. This is likely because noisy gradient estimates are less stable in higher-dimensional spaces. In contrast, AReS's stable, glass-box optimization allows for the effective tuning of a compact prompt, confirming its effectiveness is due to a superior optimization strategy, not parameter quantity.

Second, the performance gain is critically dependent on the initial priming stage. Our component analysis on EuroSAT (Table~\ref{tab:ablation_component} in the main paper) shows that local visual reprogramming alone (Local VR) achieves 70.6\% accuracy. The full AReS method, which includes priming, elevates this to 85.7\%, a substantial +15.1\% improvement demonstrating that the knowledge transferred during priming is essential for unlocking the local model's full potential.To further validate this, we created an enhanced version of BlackVIP that incorporates a local pre-training stage. To mimic a local pre-training stage, we trained its prompt-generating Coordinator decoder with a reconstruction loss, as direct supervision is not possible without a ground-truth prompt. As shown in Table~\ref{tab:enhanced_blackvip}, this local optimization provides only a marginal improvement to BlackVIP (+0.8\%). AReS still significantly outperforms this enhanced baseline by +11.6\%, confirming that the performance gain originates from our superior two-stage framework, where priming makes the local model significantly more amenable to reprogramming.

\subsection{Performance Scaling with Data Availability}
A key question is whether AReS's advantage over a strong, locally-trained baseline persists as more training data becomes available. To investigate this, we conducted an ablation study on the EuroSAT dataset, varying the number of training samples from 4 to 32 shots. The results in Table~\ref{tab:scaling} show that AReS consistently outperforms both the ZOO-based baseline (BlackVIP) and a strong local baseline (Local VR+LP) across all data settings.

While the local baseline's performance improves steadily with more data, it never surpasses AReS. Our method is not only competitive in low-data regimes but also scales more effectively as data increases. For instance, at 32 shots, AReS achieves 91.6\% accuracy, maintaining a significant +5.1\% lead over the 86.5\% from the local baseline. This confirms that the initial priming stage provides a durable advantage that local training alone cannot replicate, equipping the local model with a superior foundation that enables a higher performance ceiling.

\begin{table}[t]
\centering
\caption{Accuracy (\%) comparison on the EuroSAT dataset across different few-shot settings. AReS consistently outperforms the strong local baseline (Local VR+LP) as data availability increases.}
\label{tab:scaling}
\scriptsize
\begin{tabular}{l|cccc}
\toprule
\textbf{Method} & \textbf{4-shot} & \textbf{8-shot} & \textbf{16-shot} & \textbf{32-shot} \\
\midrule
BlackVIP & 69.3 & 71.7 & 73.3 & 72.9 \\
Local VR+LP & 59.7 & 68.1 & 80.1 & 86.5 \\
\rowcolor{gray!30}
\textbf{AReS (Ours)} & \textbf{66.1} & \textbf{73.4} & \textbf{85.7} & \textbf{91.6} \\
\bottomrule
\end{tabular}
\end{table}

\begin{table}[t]
\centering
\small
\caption{Illustrative Breakdown of AReS's Performance Composition on Flowers102. The local model for both scenarios is ViT-B/32 (glass-box Acc: 38.7\%). ``Kept correct" refers to samples the local model classified correctly that AReS also classifies correctly. ``Newly learned" refers to samples the local model misclassified that AReS now classifies correctly.}
\label{tab:synergy}
\scriptsize
\resizebox{\textwidth}{!}{%
\begin{tabular}{@{}l|rr|rr@{}}
\toprule
\multirow{2}{*}{\textbf{Initial State}}& \multicolumn{2}{c|}{\textbf{Service: RN101 (42.3\%)}} & \multicolumn{2}{c}{\textbf{Service: ViT-B/16 (69.6\%)}} \\ 
\cmidrule(lr){2-3} \cmidrule(lr){4-5}
& \textbf{\% of Dataset} & \textbf{How AReS Performed} & \textbf{\% of Dataset} & \textbf{How AReS Performed} \\
\midrule
Both Service \& Local Correct & 30.0\% & 29.5\% kept correct & 35.0\% & 34.0\% kept correct \\
Only Local Correct & 8.7\% & 7.0\% kept correct & 3.7\% & 0.5\% kept correct \\
Only Service Correct & 12.3\% & 10.0\% newly learned & 34.6\% & 19.0\% newly learned \\
Both Incorrect & 49.0\% & 4.8\% newly learned & 26.7\% & 0.1\% newly learned \\
\midrule
\rowcolor{gray!30}
\textbf{AReS Final Accuracy} & \multicolumn{2}{c|}{\textbf{51.3\%}} & \multicolumn{2}{c}{\textbf{53.6\%}} \\
\bottomrule
\end{tabular}
}
\end{table}

\subsection{Synergistic Effect of Combining Model Knowledge}
\label{sec:app_synergy}
In certain configurations, AReS's performance can surpass that of a glass-box reprogramming approach on the service model itself. This outcome stems from a synergistic effect, where our framework effectively combines the distinct knowledge of the powerful service model with the unique inductive biases of the local model. 
This process of navigating differing predictive behaviors to harness more reliable learning outcomes shares conceptual similarities with recent advances in trustworthy multi-view classification~\cite{pmlr-v267-lu25a}.
Unlike methods~\citep{balckvip_CVPR23} that use the local model as a simple feature generator, AReS’s two-stage design first primes and then leverages the local model's capabilities during reprogramming. This process allows the final model to correctly classify samples that neither the service model nor the standalone local model could handle individually.

This synergistic effect is demonstrated quantitatively in our analysis of the Flowers102 dataset (Table~\ref{tab:synergy}), where we compare scenarios with service models of varying strength. When the service model (RN101) has capabilities comparable to the local model, AReS’s final accuracy surpasses both, largely by learning to classify an additional 4.8\% of samples that were incorrect for both models initially. Conversely, when the service model (ViT-B/16) is significantly stronger, AReS's role shifts to highly effective knowledge transfer, yielding a remarkable +14.9\% absolute improvement over reprogramming the local model alone. This analysis confirms that AReS facilitates a potent combination of knowledge, creating synergistic outcomes not possible with other approaches.

\subsection{Performance on Large-Scale Benchmarks: ImageNet}
\label{app:imagenet_results}

To further validate our method's effectiveness on a large-scale, stable benchmark, we conducted experiments on ImageNet~\cite{imagenet} in a 16-shot setting, using CLIP ViT-B/16 as the service model. The results, presented in Table~\ref{tab:imagenet_results}, demonstrate AReS's significant superiority. While prior ZOO-based methods like BAR and VP w/ SPSA-GC degrade performance compared to the zero-shot baseline, and the SOTA BlackVIP provides only a marginal +0.4\% gain, AReS achieves a remarkable 80.1\% accuracy. This represents a substantial +13.4\% improvement over the zero-shot baseline and a +13.0\% gain over the next best competitor, BlackVIP. Notably, AReS even surpasses the glass-box reprogramming baseline, further highlighting the powerful synergistic effect of combining the service model's knowledge with the local model's inductive biases. This is particularly notable as it demonstrates that when a strong local model is available, AReS can effectively employ this advantage to achieve performance superior to both the service model and other local-model-based methods. In contrast, the BlackVIP baseline, despite utilizing the same local encoder architecture, fails to leverage its capabilities, providing only a negligible improvement. These compelling results on a challenging, large-scale dataset confirm that our conceptual shift away from the ZOO-based paradigm is a more robust and effective strategy for closed-box model adaptation.

\begin{table}[t]
\centering
\caption{Accuracy (\%) comparison on ImageNet in a 16-shot setting with CLIP ViT-B/16 as the service model. AReS significantly outperforms all baselines.}
\label{tab:imagenet_results}
\scriptsize
\begin{tabular}{l|cc}
\toprule
\textbf{Method} & \textbf{ImageNet Acc (\%)} & \textbf{Gain over Zero-shot (\%)} \\
\midrule
VR (glass-box) & \textcolor{gray!50}{67.4} & \textcolor{gray!50}{+0.7} \\
Zero-shot & 66.7 & - \\
\midrule
BAR & 64.6 & -2.1 \\
VP w/ SPSA-GC & 62.3 & -4.4 \\
BlackVIP & 67.1 & +0.4 \\
\rowcolor{gray!30}
\textbf{AReS (Ours)} & \textbf{80.1} & \textbf{+13.4} \\
\bottomrule
\end{tabular}
\end{table}

\subsection{Performance Analysis on Challenging VLM Scenarios}
\label{app:failure_analysis}
While AReS demonstrates considerable average performance improvements with unparalleled efficiency, its efficacy with VLMs like CLIP as a service can exhibit variability on particularly challenging datasets, notably Food101 and Cars, as indicated in Table~\ref{tab:clip_res}. The inherent difficulty of these datasets for reprogramming is underscored by the performance of even highly capable baselines. For instance, glass-box VR performed directly on the CLIP ViT-B/16 service model achieves 81.6\% on Food101, which is below the zero-shot performance of 85.9\%. On Cars, glass-box VR obtains 66.2\%, only a marginal improvement over the 65.2\% zero-shot accuracy. This suggests a ceiling for reprogramming efficacy on these complex domains, even with full model access. Consequently, ZOO-based closed-box methods like BAR and BlackVIP also struggle; on Food101, BAR (84.4\%) and BlackVIP (85.9\%) offer negligible to no improvement over zero-shot performance. Similarly, on Cars, BAR (63.0\%) underperforms the zero-shot baseline, and BlackVIP (65.4\%) provides only a minimal gain. These results highlight that substantial API call volumes and computational efforts by existing closed-box methods do not reliably translate into significant performance gains on such challenging datasets.

In this context, AReS's performance (68.8\% on Food101 and 43.2\% on Cars in Table~\ref{tab:clip_res}) is logically influenced by the capabilities of the local ViT-B/16 model. Given that our approach entirely avoids API calls during inference and subsequent reprogramming, its success is intrinsically tied to how well the primed knowledge empowers the local model for the specific downstream task. On these particularly demanding datasets, the primed local model may not fully capture the intricate features that the larger service model leverages for its strong zero-shot performance, leading to a performance gap. However, it is crucial to evaluate AReS's contribution in terms of enhancing the local model's standalone reprogrammability. As shown in Table~\ref{tab:clip_local_vr}, direct Local VR on the ViT-B/16 achieves only 18.0\% on Food101 and a mere 5.7\% on Cars. AReS elevates these figures to 68.8\% and 43.2\% respectively, marking substantial improvements of $+50.8\%$ and $+37.5\%$. This significant enhancement of the local model's utility, achieved with minimal, one-time API interaction and drastically reduced computation, is of immense practical value, particularly in scenarios where continuous and costly reliance on service APIs is untenable. Addressing the remaining performance gap on these specific challenging datasets, possibly through advancements in primed techniques or by employing more capable local architectures, presents an interesting direction for future research.

\begin{figure}[t]
    \centering
    \includegraphics[width=1.0\linewidth]{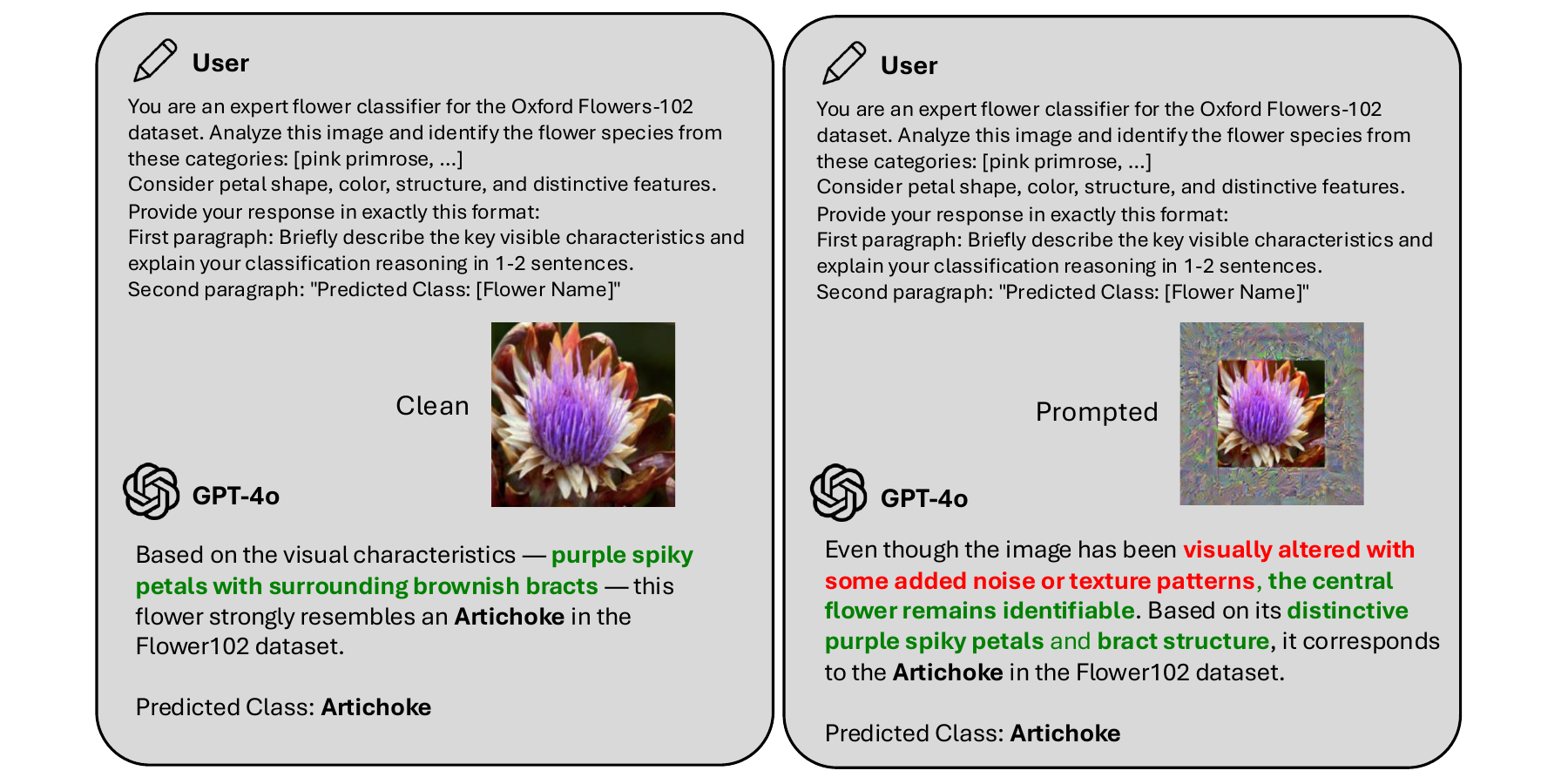}
    \caption{An example of GPT-4o's robustness to the input perturbations used in reprogramming. When presented with a clean image (left) and a prompted image (right), GPT-4o's textual reasoning explicitly acknowledges the visual alteration but correctly identifies the flower's key features in both cases. This demonstrates that the model is often invariant to the input noise that ZOO-based methods rely on, motivating a strategic shift in adaptation methods.}
    
    \label{fig:AReS_gpt_example}
\end{figure}

\subsection{Results on MLLMs and Real-world APIs}
\label{app:mllm_gpt}
To validate AReS’s practical effectiveness on modern models, we conducted a targeted evaluation using the EuroSAT 16-shot benchmark on three distinct services: the open-source MLLM LLaVA, the proprietary VLM GPT-4o, and the commercial VM API \footnote{\texttt{https://www.\,clarifai.com/}}{Clarifai}. Since MLLMs do not produce output probabilities by default, we instructed the models to act as a classifier and return a confidence score for each class (e.g., \texttt{"For the classes [...], provide a confidence score from 0 to 1 for each."}). Our evaluation reveals a critical limitation of traditional BMR on these services. As illustrated in Figure~\ref{fig:AReS_gpt_example}, powerful models like GPT-4o are highly robust to the input perturbations central to ZOO-based reprogramming. The model's textual reasoning shows it can ``see past'' the visual noise to the underlying image content, rendering the ZOO process that relies on these perturbations ineffective. This finding, supported by quantitative results where ZOO fails to improve over the zero-shot baseline (Table~\ref{tab:mllm_api}), requires a strategic shift away from perturbation-based adaptation for modern APIs.

In contrast, AReS's two-stage framework is architecturally immune to this issue. By performing a one-time priming step and shifting all subsequent reprogramming to a local model, AReS's success is not dependent on the service model's sensitivity to input noise. This allows it to achieve a remarkable \textbf{+27.8\%} gain over the zero-shot baseline on GPT-4o, a task where other methods stagnate. Furthermore, on the commercial Clarifai API---a closed-vocabulary setting where zero-shot is not an option---AReS again achieves the highest accuracy (83.2\%) at a fraction of the cost (\$0.20), making it over \textbf{300x cheaper} than BlackVIP (\$67.30). These results confirm that AReS's strategy provides a more robust, effective, and economically viable solution for adapting modern closed-box models.


\subsection{Computational Efficiency Analysis}
\label{app:computational_efficiency}
To evaluate AReS's local resource trade-offs, we report peak GPU memory and per-image inference latency measured on an NVIDIA RTX 3090 for the EuroSAT benchmark (Table~\ref{tab:computational_efficiency}). AReS requires comparable GPU memory to existing closed-box baselines such as BlackVIP while achieving +12.4\% higher accuracy. Notably, AReS achieves the fastest inference latency (60ms) by running entirely locally after the one-time priming stage, whereas API-based methods incur additional network latency on top of their local computation. This modest local computational cost represents a reasonable trade-off for eliminating API dependency and enabling cost-free, low-latency inference at deployment time.

\begin{table}[h]
\centering
\caption{GPU memory and inference latency comparison on EuroSAT with NVIDIA RTX 3090. AReS achieves the highest accuracy and fastest inference by running entirely locally after priming.}
\label{tab:computational_efficiency}
\small
\begin{tabular}{@{}lrrc@{}}
\toprule
Method & GPU Mem. (MB) & Latency (ms) & Acc. (\%) \\
\midrule
\textcolor{gray}{VR (glass-box)} & \textcolor{gray}{11,937} & \textcolor{gray}{-} & \textcolor{gray}{90.9} \\
Zero-shot & 0 & 120 & 47.9 \\
BAR & 1,649 & 122 & 77.3 \\
BlackVIP & 2,428 & 170 & 73.3 \\
\textbf{AReS (Ours)} & \textbf{2,781} & \textbf{60} & \textbf{85.7} \\
\bottomrule
\end{tabular}
\end{table}

\subsection{Robustness to Domain Gap Between Service and Local Models}
\label{app:domain_gap}
To evaluate AReS's robustness when there is a substantial domain gap between the service model and the local model, we conduct an experiment using a domain-specific dermatology service model (DermCLIP, pretrained on the Derm1M dataset~\citep{yan2025derm1m}) paired with a generic local encoder (ViT-B/16 pretrained on natural images) on the HAM10000 skin lesion dataset~\citep{tschandl2018ham10000} under the 16-shot setting. This setup represents a challenging scenario where the service model possesses highly specialized domain knowledge (medical dermatology), while the local model has only been exposed to generic natural image features.
Recent works have highlighted the importance of domain-specific datasets in medical imaging~\citep{wang2025doctor, 10.1145/3746027.3758241}, yet deploying such specialized models often requires costly API access. AReS addresses this by transferring specialized knowledge to a generic local model through one-time priming.

Despite the substantial domain mismatch (medical vs.\ natural images), AReS outperforms both the zero-shot baseline and BlackVIP (Table~\ref{tab:domain_gap}), confirming that it is not intrinsically tied to domain-matched encoders. The priming mechanism successfully aligns the generic local feature space to the specialized service model even without domain-specific pretraining. The smaller gains compared to domain-aligned settings are consistent with our theoretical analysis: extreme domain gaps increase the priming bound $\varepsilon$ (Assumption~\ref{app:assum_faithful_distill}), and performance is ultimately constrained by the local encoder's capacity to represent specialized features (e.g., skin lesion patterns).

\begin{table}[h]
\centering
\caption{Domain gap experiment: DermCLIP (pretrained on Derm1M) $\rightarrow$ ViT-B/16 (pretrained on ImageNet) on HAM10000 under the 16-shot setting.}
\label{tab:domain_gap}
\small
\begin{tabular}{@{}lccc@{}}
\toprule
Method & Zero-shot & BlackVIP & \textbf{AReS (Ours)} \\
\midrule
Acc. (\%) & 62.5 & 63.2 & \textbf{63.6} \\
\bottomrule
\end{tabular}
\end{table}

\subsection{Robustness to Partial API Outputs}
\label{app:partial_api}
Many real-world APIs return only partial outputs, such as top-$k$ predictions rather than the full probability distribution over all classes. To evaluate AReS's robustness under such constraints, we conduct an ablation study restricting the service model's output to: (i) \emph{top-$k$ soft labels} (top-$k$ class names with their confidence scores) and (ii) \emph{top-$k$ hard labels} (only an ordered top-$k$ list without confidence scores). For both settings, we convert the partial output into a usable training target by keeping the revealed top-$k$ information and distributing the remaining probability mass uniformly across the unrevealed classes. For hard labels, we assign a total mass of $c(k)=\frac{k}{k+1}$ to the top-$k$ set and split it across the $k$ classes by rank using a decay weight $w_r \propto \frac{1}{r}$, assigning higher mass to higher-ranked classes.

Results on EuroSAT (10 classes) are shown in Table~\ref{tab:partial_api}. AReS demonstrates graceful degradation as $k$ decreases, while remaining close to full-output performance for moderate $k$ values (e.g., $k=5$ achieves 85.0\% vs.\ 85.7\% with full output). This constraint equally affects all methods relying on output probabilities (including BAR and BlackVIP), as their optimization objectives similarly depend on full prediction distributions.

\begin{table}[h]
\centering
\caption{Robustness to partial API outputs (top-$k$) on EuroSAT. ``Soft'' uses top-$k$ class names with confidence scores; ``Hard'' uses only the ordered top-$k$ list.}
\label{tab:partial_api}
\small
\begin{tabular}{@{}lcccc@{}}
\toprule
Top-$k$ & 1 & 2 & 5 & all (10) \\
\midrule
Soft & 80.1 & 82.3 & 85.0 & 85.7 \\
Hard & 70.7 & 74.6 & 78.9 & 80.6 \\
\bottomrule
\end{tabular}
\end{table}

\section{Further Discussion on Contributions and Design Rationale} \label{app:further_discussion}
We further clarifies the core contributions of AReS, its design rationale, and its positioning relative to alternative paradigms.

\subsection{On Novelty: Priming for Reprogrammability vs. Traditional Distillation}
A key contribution of AReS is the high-level `prime-then-reprogram' conceptual framework. This framework is flexible, and the priming stage is not constrained to a single implementation.
While this work successfully uses an objective function inspired by knowledge distillation (KL divergence), the core idea is to enhance the local model's reprogrammability. Any future technique that can effectively prime a local model's amenability to reprogramming from a closed-box source could be integrated into this framework.
This focus on ``\textbf{priming for reprogrammability}" is what fundamentally distinguishes AReS from traditional distillation (also in Sec.~\ref {sec:remark_kd}):

\begin{enumerate}[leftmargin=3em]
    \item Traditional KD aims to create a final, high-performance student model. This approach often fails in BMR scenarios, particularly when adapting standard Vision Models (VMs). For instance, if the service model is a VM (e.g., pre-trained on ImageNet) and the target is Flowers102, their \textbf{label spaces are disjoint}. One cannot directly distill a Flowers102 classifier from an ImageNet classifier.
    \item AReS's Priming is \textbf{solely a preparatory step}. It is not intended to solve the final task. Instead, it operates in the service model's label space (e.g., ImageNet) to make the local model API-aware when given downstream data (e.g., Flowers102 images).
\end{enumerate}
This novel preparatory focus is what enables the second, local reprogramming stage to be highly effective, even in challenging disjoint-label VM scenarios where \emph{traditional distillation is not applicable}.

We note that knowledge distillation has been explored extensively in both vision and language domains, including step-by-step distillation from large language models~\citep{hsieh-etal-2023-distilling, ho2023large}, multi-chain-of-thought consistent distillation~\citep{chen-etal-2023-mcc}, and black-box few-shot knowledge distillation~\citep{nguyen2022black}. While these methods have shown success in their respective settings, they fundamentally aim to produce a student model that solves the same task as the teacher. In contrast, AReS's priming stage operates across disjoint label spaces and serves only as a preparatory mechanism for subsequent reprogramming, representing a fundamentally different use of the distillation objective.

\subsection{On Practicality: Cost-Free Inference as a Core Design Goal}
A core design choice of AReS is to perform a single-pass knowledge transfer, which then \textbf{intentionally eliminates all subsequent API dependency} during inference. This is not a shortcoming but the central advantage of our framework, enabling practical deployment in on-device, edge, offline, or cost-sensitive scenarios.
The practical motivation for this new paradigm is threefold:

\begin{enumerate}[leftmargin=3em] \item \textbf{ZOO-based Methods are Less Effective for Modern APIs:} We find that the ZOO-based paradigm is becoming ineffective on modern, robust APIs. Powerful models like GPT-4o are often less sensitive to the input perturbations that ZOO methods rely on . Our experiments (Table~\ref {tab:mllm_api}) confirm this: ZOO-based methods provide little to no improvement over the zero-shot baseline on GPT-4o. In contrast, AReS achieves a \textbf{+27.8\% gain}, succeeding precisely where the previous paradigm fails. This provides a strong practical reason to shift to a local model primed by the service API.

\item \textbf{Synergistic Performance:} The AReS framework can unlock synergistic effects by combining the knowledge of the powerful service model with the inductive biases of the local model. As shown in our analysis (Table~\ref{tab:synergy}), this allows AReS to correctly classify samples that neither the service model nor the standalone local model could handle individually. This synergy is so effective that AReS can even \textbf{outperform the glass-box VR performance of the service model}—the theoretical performance ceiling for any ZOO-based method.

\item \textbf{Enabling Real-World Deployment:} This design unlocks a wide range of practical scenarios where perpetual API access is not feasible or desirable, such as on-device or edge-computing applications, scenarios requiring real-time or offline adaptation, and cost-sensitive applications where a $>$99.99\% reduction in API calls is a primary requirement.
\end{enumerate}

Our real-world API experiments (Table~\ref{tab:mllm_api}) confirm this total practical value. On the commercial Clarifai API, AReS achieves superior accuracy (83.2\%) for just \$0.20, while BlackVIP costs \$67.30 for lower accuracy (72.1\%).

\subsection{On Efficacy: AReS vs. Standalone Local Model Reprogramming}
To isolate the contribution of our priming stage, we conducted extensive component analyses comparing AReS to a Local VR baseline. This baseline represents the naive approach of simply reprogramming the local model without any priming.

The results conclusively demonstrate that our performance gain is not merely from ``just training a small network," but from the synergistic ``\textbf{prime-then-reprogram}" framework.

\begin{enumerate}[leftmargin=3em]
    \item On EuroSAT (Table~\ref{tab:ablation_component}): Local VR (baseline) achieves 70.6\% accuracy. The full AReS framework achieves 85.7\%—a +15.1\% gain directly attributable to the priming stage.
    \item On VLMs (Table~\ref{tab:vitb16_vitb32_res_fewshot}): AReS achieves a 65.4\% average, a +27.2\% improvement over the 38.2\% from Local VR.
    \item On VMs (Table~\ref{tab:vm_local_vr_fullshot}): AReS (50.4\%) improves upon Local VR (45.3\%) by +5.1\%.
\end{enumerate}
This empirically proves that our priming stage successfully transfers knowledge from the service model, making the local model significantly more amenable to effective reprogramming.

\subsection{On the Theoretical Framework}
Our theoretical analysis provides the formal justification for why our two-stage approach is effective.
The assumption of $\epsilon$-Faithful Priming (Assumption~\ref{app:assum_faithful_distill}) is not a given; it is the explicit goal of our Priming stage. Our framework is constructive:

\begin{enumerate}[leftmargin=3em]
    \item Principle: We posit that if $\epsilon$-faithful priming can be achieved, the unstable, query-heavy closed-box optimization on the service model $\mathcal{F}_{S}$ can be provably bounded by an efficient, stable, first-order optimization on the local model $\mathcal{F}_{L}$ (Theorem~\ref{app:thm_bound_service_model_perf}).
    \item Mechanism: The Primeing stage is the practical mechanism we designed to achieve this faithful priming (i.e., a small $\epsilon$).
    We also empirically validate the robustness of this mechanism in our ablation study (Fig.~\ref{fig:ablation_ditil_loss}). The results show that our practical, probability-only (closed-box) method achieves the same strong performance as a variant using logits (translucent-box), confirming that our theoretical analysis is well-grounded and its assumptions do not create a gap with our practical implementation.
    \item Result: Our extensive empirical results—especially the +27.8\% gain on GPT-4o where ZOO methods fail —demonstrate that this mechanism succeeds in practice.
\end{enumerate}
The theory, therefore, explains why our practical approach of shifting optimization to a local model is a sound and effective strategy for leveraging the service model's capabilities. To the best of our knowledge, this is \textbf{the first work to provide a formal theoretical analysis for model reprogramming} in the context of closed-box service models.

\subsection{Positioning Within the Broader Visual Prompting Landscape}
\label{app:broader_prompting}
AReS is situated within a rapidly growing landscape of visual prompting and parameter-efficient fine-tuning (PEFT) methods~\citep{jia2022visual, tu2023visual, 10.1145/3746027.3754858, xiao2025visual, mai2025lessons, wang2025ctr, xiao2026not}. 
Furthermore, this push towards highly efficient adaptation and data utilization conceptually parallels recent breakthroughs in broader efficient learning paradigms, such as dataset distillation \cite{zhao2026hieramp, zhao2025taming} and efficient generative modeling \cite{zhao2026s2dit}, and LLM-driven active learning \cite{qiyuanyuan2026mollia}.
These works have advanced prompt design in diverse directions, including instance-aware prompting, intermediate-representation-based transfer, and systematic studies of efficiency-performance trade-offs in visual recognition. Beyond standard white-box settings, recent efforts have also explored visual prompting under distribution shift and restricted model access~\citep{zhang2025dpcore, zhang2025otvp, park2025zip}, extending the prompting paradigm to scenarios where gradient information is unavailable or the model must adapt continually.

While these methods primarily operate in settings where the model is either fully accessible or adapted at test time, AReS addresses a distinct challenge: efficiently transferring knowledge from a closed-box service model to a local model through one-time priming, followed by local glass-box reprogramming. This positions AReS as complementary to these approaches, offering a practical pathway for leveraging powerful but inaccessible service models.

\section{Limitations}
\label{app:limitations}
Our AReS method, while offering benefits in efficiency and local model enhancement, has certain limitations. In extreme data scarcity scenarios, such as 1/2-shot learning, the one-time priming phase may be constrained (see Fig.~\ref{fig:ablation_shot}). This can challenge the $\epsilon$-faithful primed assumption in our theoretical analysis, potentially affecting the optimality of subsequent local VR and leading to suboptimal performance. Furthermore, AReS's performance is inherently affected by the representational capacity of the chosen local model. If the local model, even after priming, cannot capture the complexities of a particularly challenging downstream task, AReS may not achieve the performance levels of methods that continuously leverage the full computational power and feature richness of the larger service model throughout adaptation (discussed in Sec.
\ref{app:failure_analysis}).
An interesting future direction is to explore connections between AReS and black-box test-time adaptation~\citep{zhang2026adapting, maharana2026continual}, where models must adapt to distribution shifts at test time without access to the model's internals. Combining the strengths of one-time priming with online adaptation strategies could further enhance the robustness and flexibility of closed-box model adaptation.

\end{document}